\newtheorem{defin}{Definition}
\newtheorem{rem}{Remark}
\newcommand{\rank}{\text{rank}}
\newcommand{\pnorm}[2]{{\Vert #1 \Vert} _{#2}}
\newcommand{\numobs}{n}
\newcommand{\inprod}[2]{\langle#1, #2\rangle}
\newcommand{\reals}{\mathbb{R}}
\mathchardef\mhyphen="2D
\DeclareMathOperator*{\argmax}{arg\,max}
\newcommand{\vertiii}[1]{{\left\vert\kern-0.25ex\left\vert\kern-0.25ex\left\vert #1
    \right\vert\kern-0.25ex\right\vert\kern-0.25ex\right\vert}}
\renewcommand\exp{\operatorname{exp}}
\newcommand{\vect}[1]{{\boldsymbol{#1}}}
\def\balpha{\vect{\alpha}}
\def\bDelta{\vect{\Delta}}
\def\bu{{\mathbf{u}}}
\def\bv{{\mathbf{v}}}
\def\bx{{\mathbf{x}}}
\def\by{{\mathbf{y}}}
\def\bA{{\mathbf{A}}}
\def\bB{{\mathbf{B}}}
\def\bD{{\mathbf{D}}}
\def\bH{{\mathbf{H}}}
\def\bL{{\mathbf{L}}}
\def\bM{{\mathbf{M}}}
\def\bU{{\mathbf{U}}}
\def\bV{{\mathbf{V}}}
\def\bX{{\mathbf{X}}}
\def\bY{{\mathbf{Y}}}
\def\bold0{{\mathbf{0}}}
\def\bbR{{\mathbb{R}}}
\def\cA{\mathcal{A}}
\def\cN{\mathcal{N}}
\def\cU{\mathcal{U}}
\def\cV{\mathcal{V}}
\def\sfr{{\mathsf{r}}}
\def\sfs{{\mathsf{s}}}
\def\sfA{\mathsf{A}}
\def\sfB{\mathsf{B}}
\def\sfL{\mathsf{L}}
\def\sfS{\mathsf{S}}
\def\sfU{\mathsf{U}}
\newtheorem{theorem}{Theorem}
\newtheorem{lemma}{Lemma}
\renewcommand{\text}[1]{{\textnormal{#1}}}
\newcommand{\dima}{n}
\newcommand{\dimb}{d}
\newcommand{\matsnorm}[2]{|\!|\!| #1 | \! | \!|_{{#2}}}
\newcommand{\nuclear}[1]{\ensuremath{\matsnorm{#1}{\operatorname{\tiny{nuc}}}}}
\newcommand{\nucnorm}[1]{\ensuremath{\nuclear{#1}}}
\newcommand{\order}{{\Theta}}
\newcommand{\kbar}{\ensuremath{\bar{k}}}
\newcommand{\betal}{\ensuremath{\bB^{(\sfL)}}}
\newcommand{\betals}{\ensuremath{\bB^{(\sfL \cup \sfS)}}}
\newcommand{\Mo}{\ensuremath{\tilde{M_1} }}
\newcommand{\mkb}{\ensuremath{m_{\kbar}}} 
\newcommand{\defn}{\ensuremath{:  =}}
\DeclareMathOperator{\Sspan}{span}
\newcommand{\func}{\textsc{Greedy}}
\newcommand{\geco}{\textsc{GECO}}
\newcommand{\greedysel}{\texttt{GreedySel}}
\newcommand{\OMPsel}{\texttt{OMPSel}}
\author[1]{Rajiv Khanna}
\author[1]{Ethan R. Elenberg}
\author[1]{Alexandros G. Dimakis}
\author[2]{Sahand Negahban}
\affil[1]{Department of Electrical and Computer Engineering \authorcr The University of Texas at Austin \authorcr \texttt{\{rajivak,\,elenberg\}@utexas.edu}, \texttt{dimakis@austin.utexas.edu}}
\affil[2]{Department of Statistics \authorcr Yale Univeristy \authorcr \texttt{sahand.negahban@yale.edu}}
\begin{document}

\title{On Approximation Guarantees for Greedy Low Rank Optimization}

\maketitle

\begin{abstract}
We provide new approximation guarantees for greedy low rank matrix estimation under standard assumptions of restricted strong convexity and smoothness. Our novel analysis  also uncovers previously unknown connections between the low rank estimation and combinatorial optimization, so much so that our bounds are reminiscent of corresponding approximation bounds in submodular maximization. Additionally, we also provide statistical recovery guarantees. Finally, we present empirical comparison of greedy estimation with established baselines on two important real-world problems. 
\end{abstract}
\section{Introduction}
\label{sec:introduction}
Low rank matrix estimation stands as a major tool in modern
dimensionality reduction and unsupervised learning. The singular value
decomposition can be used when the optimization objective is
rotationally invariant to the parameters. However, if we wish to optimize
over more complex objectives we must choose to either
optimize over the non-convex space (which have seen recent theoretical
success)~\cite{Parketal16s,Jainetal13s,ChenWa15s,LeeBres13,JaiTewKar14} or rely on convex
relaxations to the non-convex
optimization~\cite{Recht2010s,Neghaban:2011lowrank,RohTsy2011}.

More concretely, in the low-rank matrix optimization problem we wish
to solve
\begin{equation}
  \label{eq:lowrankopt}
  \argmax_{\Theta} \ell (\Theta) \quad \text{s.t. $\rank(\Theta) \leq r$} ,
\end{equation}
and rather than perform the computationally intractable optimization
above researchers have studied convex relaxations of the form
\begin{equation*}
  \label{eq:nucopt}
  \argmax_{\Theta} \ell (\Theta) - \lambda \nucnorm{\Theta} .
\end{equation*}
Unfortunately, the above optimization can be computationally
taxing. General purpose solvers for the above optimization problem
that rely on semidefinite programming require $\order(\dima^3 \dimb^3)$ computation, which is prohibitive. Gradient descent
techniques require $\order(\epsilon^{-1/2} (\dima^3 + \dimb^3))$
computational cost for an epsilon accurate solution. This improvement
is sizeable in comparison to SDP solvers. Unfortunately, it is still
prohibitive for large scale matrix estimation.

To alleviate some of the computational issues an alternate vein of research has focused on directly optimizing the
non-convex problem in \eqref{eq:lowrankopt}. To that end,
authors have studied the convergence properties of
\begin{equation*}
  \label{eq:altmin}
  \argmax_{\bU \in \reals^{\dima \times r}, \bV \in \reals^{\dimb \times r}} \ell  (\bU \bV^T) .
\end{equation*}
Solving the problem above automatically forces the solution to be
low rank and recent results have shown promising behavior. An
alternative approach is to optimize via rank one updates to the
current estimate~\cite{ShalevShwartz:2011vi,Wangetal2015}. This
approach has also been studied in more general contexts such as boosting~\cite{BuhYu2009},
coordinate descent~\cite{jaggi13FW,Jaggi:2010nn}, and incremental atomic norm optimization~\cite{Gribonval:2006ch,Barron08,khaetal16,Raoetal2015}.
\subsection{Set Function Optimization and Coordinate Descent}
\label{subsec:set}
The perspective that we take is treating low rank matrix estimation as
a set optimization over an infinite set of atoms. Specifically, we
wish to optimize
\begin{equation*}
  \argmax_{\{\bX_1, \ldots \bX_k\} \in \cA } \ell \left(\sum_{i=1}^{k} \alpha_i \bX_i\right),
\end{equation*}
where the set of atoms $\cA$ is the set of all rank one matrices with
unit operator norm. This settings is analogous to that taken in the
results studying atomic norm optimization, coordinate descent
via the norm in total variation, and Frank-Wolfe style
algorithms for atomic optimization. This formulation allows us to
connect the problem of low rank matrix estimation to that of
submodular set function optimization, which we discuss in the
sequel. Before proceeding we discuss related work and an informal
statement of our result.
\subsection{Informal Result and Related Work}
\label{subsec:informal}
Our result demonstrates an exponential decrease in the amount of error
suffered by greedily adding rank one matrices to the low rank matrix
approximation.
\begin{theorem}[Approximation Guarantee, Informal]
  If we let $\Theta_k$ be our estimate of the rank $r$ matrix
  $\Theta^*$ at iteration $k$, then for some universal constant $c$
  related to the restricted condition number of the problem we have
  \begin{equation*}
    \ell (\Theta_k) - \ell (0) \geq (1-\exp(-c k/r)) ( \ell (\Theta^*) - \ell (0)).
  \end{equation*}
\end{theorem}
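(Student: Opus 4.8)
The plan is to reduce the theorem to a one-step \emph{progress lemma} of the form $\ell(\Theta_{k+1})-\ell(\Theta_k)\ge \frac{c}{r}\bigl(\ell(\Theta^*)-\ell(\Theta_k)\bigr)$ and then telescope. Writing $h_k := \ell(\Theta^*)-\ell(\Theta_k)$ for the optimality gap, such a bound immediately gives $h_{k+1}\le(1-c/r)h_k$, hence $h_k\le(1-c/r)^k h_0\le\exp(-ck/r)\,h_0$; since the greedy iterate starts at $\Theta_0=0$ we have $h_0=\ell(\Theta^*)-\ell(0)$, and rearranging $\ell(\Theta^*)-\ell(\Theta_k)\le e^{-ck/r}(\ell(\Theta^*)-\ell(0))$ yields exactly the claimed inequality. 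So the entire content is the progress lemma, and the constant $c$ should emerge as the inverse restricted condition number $\gamma/L$, where $\gamma=\rsc$ is the restricted strong concavity parameter and $L$ the restricted smoothness parameter.

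I would establish the progress lemma by combining two estimates that sandwich the gradient $g:=\nabla\ell(\Theta_k)$. For the \emph{lower} estimate, observe that a greedy step moves along the best unit-norm rank-one atom, i.e.\ the top singular pair $\bX$ of $g$, for which $\inner{g}{\bX}=\opnorm{g}$. Restricted smoothness applied to $\Theta_k+\eta\bX$ gives $\ell(\Theta_k+\eta\bX)\ge\ell(\Theta_k)+\eta\,\opnorm{g}-\tfrac{L}{2}\eta^2$; optimizing the step size at $\eta=\opnorm{g}/L$ (a fully corrective update can only do better) produces the gain bound $\ell(\Theta_{k+1})-\ell(\Theta_k)\ge\opnorm{g}^2/(2L)$. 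For the \emph{upper} estimate I use restricted strong concavity between $\Theta_k$ and $\Theta^*$: with $\Delta:=\Theta^*-\Theta_k$, $h_k\le\inner{g}{\Delta}-\tfrac{\gamma}{2}\frob{\Delta}^2$. Bounding $\inner{g}{\Delta}\le\opnorm{g}\,\nucnorm{\Delta}\le\opnorm{g}\sqrt{\rho}\,\frob{\Delta}$ with $\rho:=\rank(\Delta)$, I get $\opnorm{g}\ge\tfrac{1}{\sqrt{\rho}}\bigl(h_k/\frob{\Delta}+\tfrac{\gamma}{2}\frob{\Delta}\bigr)$, and minimizing the right-hand side over $\frob{\Delta}$ by AM--GM gives $\opnorm{g}\ge\sqrt{2\gamma h_k/\rho}$. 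Chaining the two estimates yields $\ell(\Theta_{k+1})-\ell(\Theta_k)\ge\tfrac{\gamma}{L\rho}\,h_k$, which is the progress lemma with $c=\gamma/L$ as soon as $\rho$ is of order $r$.

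The step I expect to be the main obstacle is precisely the control of $\rho=\rank(\Theta^*-\Theta_k)$. Naively $\rho\le r+k$, which would degrade the per-step factor to $\gamma/\bigl(L(r+k)\bigr)$ and destroy the clean $k/r$ exponent. I plan to remove this growth by exploiting that the greedy procedure is corrective: because $\Theta_k$ maximizes $\ell$ over the span of the already-selected atoms, $g$ is orthogonal to that span, so $\inner{g}{\Theta_k}$ drops out of $\inner{g}{\Delta}$ and the only rank that survives in the nuclear/operator duality step is that of $\Theta^*$ itself, namely $r$. Making this orthogonality argument precise, and folding any residual dependence into the restricted condition number by taking the $\gamma$ and $L$ constants over matrices of rank at most $r$ plus the current working rank, is the delicate part; once it is in place, the telescoping above closes the proof.
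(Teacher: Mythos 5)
Your overall architecture --- a one-step progress lemma followed by telescoping --- and your lower (gain) estimate are exactly the paper's analysis of Algorithm~2 (its Lemma~2): restricted smoothness along the best rank-one direction gives a gain of at least $\opnorm{g}^2/(2L)$, and the fully corrective refit only helps. The gap is in your upper estimate, in precisely the step you flag as delicate: the orthogonality mechanism you sketch does not suffice as stated. It is true that $\inner{g}{\Theta_k}=0$ (first-order optimality of the refit over the span of the selected atoms), but after that substitution your chain reads
\begin{equation*}
h_k \;\le\; \inner{g}{\Theta^*} - \tfrac{\gamma}{2}\frob{\Delta}^2 \;\le\; \opnorm{g}\sqrt{r}\,\frob{\Theta^*} - \tfrac{\gamma}{2}\frob{\Delta}^2 ,
\end{equation*}
where the linear term now carries $\frob{\Theta^*}$ while the quadratic term still carries $\frob{\Delta}$. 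These are unrelated quantities ($\frob{\Theta^*}$ can vastly exceed $\frob{\Delta}$, e.g.\ when $\Theta_k$ is close to $\Theta^*$), so the AM--GM optimization over a single variable cannot be executed. The only safe move is to discard the quadratic term, which yields $\opnorm{g}\ge h_k/(\sqrt{r}\,\frob{\Theta^*})$ and a per-step gain of order $h_k^2/(Lr\frob{\Theta^*}^2)$; telescoping that produces the sublinear, Frank--Wolfe-type rate $\nucnorm{\Theta^*}^2/k$ that the paper is explicitly trying to beat, not a geometric rate.

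The repair is to apply the orthogonality to \emph{both} terms at once, which is exactly what the paper's ``sequential orthogonalization'' device accomplishes. Let $P_{\bU},P_{\bV}$ project onto the column/row spans of the current atoms and split $\Delta=\Delta_{\text{in}}+\Delta_{\text{out}}$ with $\Delta_{\text{in}}=P_{\bU}\Delta P_{\bV}$. First-order optimality kills the inner part in the linear term, $\inner{g}{\Delta}=\inner{g}{\Delta_{\text{out}}}$, and since the split is Frobenius-orthogonal, $\frob{\Delta}^2\ge\frob{\Delta_{\text{out}}}^2$, so
\begin{equation*}
h_k \;\le\; \inner{g}{\Delta_{\text{out}}} - \tfrac{\gamma}{2}\frob{\Delta_{\text{out}}}^2 .
\end{equation*}
Crucially, $\Delta_{\text{out}}=\Theta^*-P_{\bU}\Theta^*P_{\bV}$ depends only on $\Theta^*$ and has rank at most $2r$, so the operator--nuclear duality step pays only $\sqrt{2r}$ against the \emph{same} quantity $\frob{\Delta_{\text{out}}}$ that appears in the quadratic term; AM--GM now closes and gives $\opnorm{g}^2\ge \gamma h_k/r$ up to a factor $2$, hence the progress lemma with $c\asymp \gamma/L$, where (as you anticipated) $\gamma=m_{r+k}$ must be taken over rank-$(r+k)$ differences while only the rank of $\Theta^*$ enters the $1/r$ factor. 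The paper implements this same idea combinatorially: it orthogonalizes the $r$ optimal atoms against the current set to get $\sfS^R$ with $|\sfS^R|\le r$, bounds $\opnorm{g}^2\ge\frac{1}{r}\|P_{\bU_{\sfS^R}}\, g\, P_{\bV_{\sfS^R}}\|_F^2$, and converts this projected gradient into the remaining gap $B(i)$ via restricted strong concavity at level $m_{r+k}$. With this one step repaired, your proof coincides with the paper's.
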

Note that after $k$ iterations the matrix $\Theta_k$ will be at most
rank $k$. Now, we can contrast this result to related work.

\paragraph{Related work:} There has been a wide array of studies
looking at the computational and statistical benefits of rank one
updates to estimating a low rank matrix. At its most basic, the
singular value decomposition will keep adding rank one approximations
through deflation steps. Below we discuss a few of the results.

The work can be generally segmented into to sets of results. Those
results that present sublinear rates of convergence and those that
obtain linear rates. Interestingly, parallel lines of work have also
demonstrated similar convergence bounds for more general atomic or
dictionary element
approximations~\cite{BuhYu2009,Gribonval:2006ch,Barron08,khaetal16}. For
space constraints, we will summarize these results into two categories
rather than explicitly state the results for each individual paper.

If we define the atomic norm of a matrix $\bM \in \reals^{\dima \times \dimb}$ to be $\nucnorm{\bM}$ to be
the sum of the singular values of that matrix, then the bounds
establish in the sublinear convergence cases behave as
\begin{equation*}
  \ell(\Theta^*) - \ell(\Theta_k) \leq \frac{\nucnorm{\Theta^*}^2}{k} ,
\end{equation*}
where we take $\Theta^*$ to be the best rank $k$ solution. What we
then see is convergence towards the optimal bound. However,
we expect our statistical error to behave as
$r (\dima + \dimb)/\numobs$ where $\numobs$ is the
number of samples that we have received from our statistical model and
$\Theta^*$ is rank $r$~\cite{Neghaban:2011lowrank,RohTsy2011}. We can take $\nucnorm{\Theta^*} \approx r$,
which would then imply that we would need $k$ to behave as $\numobs/(\dima +
\dimb)$. However, that would then imply that the rank of our matrix
should grow linearly in the number of observations in order to achieve
the same statistical error bounds. The above error bounds are
``fast.'' If we consider a model that yields slow error bounds, then
we expect the error to behave like $\nucnorm{\Theta^*}
\sqrt{\frac{\dima + \dimb}{\numobs}}$. In that case, we can take $k
\geq \nucnorm{\Theta^*} \sqrt{\frac{\numobs}{\dima + \dimb}}$, which
looks better, but still requires significant growth in $k$ as a
function of $n$.

To overcome the above points, some authors have aimed to study similar
greedy algorithms that then enjoy exponential rates of convergence as
we show in our paper. These results share the most similarities with
our own and behave as
\begin{equation*}
  \ell (\Theta_k) \geq (1-\gamma^k) \ell (\Theta^*)
\end{equation*}
where $\Theta^*$ is the best over all set of parameters. This result
decays exponentially. However, when one looks at the behavior of
$\gamma$ it will typically act as $\exp{(   \nicefrac{-1}{\min({\dima,\dimb})} )}$, for an $\dima
\times \dimb$ matrix. As a result, we would need to take $k$ on the
order of the number of the dimensionality of the problem in order to
begin to see gains. In contrast, for our result listed above, if we
seek to only compare to the best rank $r$ solution, then the gamma we
find is $\gamma = \exp{( \nicefrac{-1}{r})}$. Of course, if we wish to find a
solution with full-rank, then the bounds we stated above match the
existing bounds.

In order to establish our results we rely on a notion introduced in
the statistical community called restricted strong convexity. This
assumption has connections to ideas such as the Restricted Isometry
Property, Restricted Eigenvalue Condition, and Incoherence. In the
work by Shalev-Shwartz, Gonen, and Shamir~\cite{ShalevShwartz:2011vi}
they present results under a form of strong convexity condition
imposed over matrices. Under that setting, the authors demonstrate
that
\begin{equation*}
  \ell(\Theta_k) \geq \ell (\Theta^*) - \frac{\ell(0) r}{k}
\end{equation*}
where $r$ is the rank of $\Theta^*$. In contrast, our bound behaves as
\begin{equation*}
  \ell(\Theta_k) \geq \ell(\Theta^*) + (\ell(\Theta^*) - \ell(0)) \exp{(\nicefrac{-k}{r})}
\end{equation*}

\paragraph{Our contributions:} We improve upon the linear rates of convergence
for low-rank approximation using rank one updates by connecting the
coordinate descent problem to that of submodular optimization. We
present this result in the sequel along with the algorithmic
consequences. We demonstrate the good performance of these rank one
updates in the experimental section.

\section{Background}
\label{sec:background}
We begin by fixing some notation. We represent sets using sans script fonts \textit{e.g.} $\sfA, \sfB$. Vectors are represented using lower case bold letters \textit{e.g.} $\bx,\by$, and matrices are represented using upper case bold letters \textit{e.g.} $\bX,\bY$.  Non-bold face letters are used for scalars \textit{e.g.} $j,M,r$ and function names \textit{e.g.} $f(\cdot)$. The transpose of a vector or a matrix is represented by $\top$ \textit{e.g.} $\bX^\top$. Define $[p]:=\{1,2,\ldots, p\}$. For singleton sets, we write $f(j) := f(\{j\})$. Size of a set $\sfS$ is denoted by $|\sfS|$. $\inprod{\cdot}{\cdot}$ is used for matrix inner product. 

Our goal is to analyze greedy algorithms for low rank estimation. Consider the classic greedy algorithm that picks up the next element \emph{myopically} \textit{i.e.} given the solution set built so far, the algorithm picks the next element as the one which maximizes the gain obtained by adding the said element into the solution set. Approximation guarantees for the greedy algorithm readily imply for the subclass of functions called \emph{submodular functions} which we define next. 
\begin{defin}
A set function $f(\cdot) : [p] \rightarrow \bbR$ is submodular if for all $\sfA, \sfB \subseteq [p]$,
\begin{align*}
 f(\sfA) + f(\sfB) \geq f(\sfA \cup \sfB) + f(\sfA \cap \sfB).
 \end{align*}
\end{defin}

Submodular set functions are well studied and have many desirable properties that allow for efficient minimization, and maximization with approximation guarantees. Our low rank estimation problem also falls under the purview of another class of functions called \emph{monotone} functions. A function is called monotone if and only if $f(\sfA) \leq f(\sfB)$ for all $\sfA \subseteq \sfB$. For the specific case of maximizing monotone submodular set functions, it is known that the greedy algorithm run for (say) $k$ iterations is guaranteed to return a solution that is within $(1-\nicefrac{1}{e})$ of the optimum set of size $k$~\citep{nemhauser1978}. Moreover, without further assumptions or knowledge of the function, no other polynomial time algorithm can provide a better approximation guarantee unless P=NP~\citep{feige1998}.

More recently, a line of works have shown that the greedy approximation guarantee that is typically applicable to monotone submodular functions can be extended to a larger class of functions called \emph{weakly} submodular functions~\citep{Elenberg:2016,Khanna:2016vr}. Central to the notion of weak submodularity is the quantity submodularity ratio which we define next. 

\begin{defin}[Submodularity Ratio~\cite{Kempe:2011ue}]\label{def:submodularityRatio}
    Let $\sfS, \sfL \subset [p]$ be two disjoint sets, and $f(\cdot) : [p] \rightarrow \bbR$. The submodularity ratio of $\sfL$ with respect to $\sfS$ is given by
    \begin{align}
    \gamma_{\sfL,\sfS} := \frac{\sum_{j\in \sfS} \left[f(\sfL \cup \{j\}) - f(\sfL) \right]}{f(\sfL \cup \sfS) - f(\sfL)} \label{eq:submodDef} .
    \end{align}
    The submodularity ratio of a set $\sfU$ with respect to an integer $k$ is given by
    \begin{align}
    \gamma_{\sfU,k} := \min_{\substack{\sfL,\sfS :  \sfL \cap \sfS = \emptyset ,\\ \sfL \subseteq \sfU,  |\sfS| \leq k}} \gamma_{\sfL,\sfS} .
    \end{align}
\end{defin}

It is easy to show that $f(\cdot)$ is submodular if and only if $\gamma_{\sfL,\sfS} \geq 1$ for all sets $\sfL$ and $\sfS$. However, as noted by~\citet{Kempe:2011ue,Elenberg:2016}, an approximation guarantee is guaranteed when $0 < \gamma_{\sfL,\sfS}\; \forall \sfL,\sfS $, thereby extending the applicability of the greedy algorithm to a much larger class of functions. The subset of monotone functions which have $\gamma_{\sfL,\sfS} > 0\; \forall \sfL,\sfS $ are called weakly submodular functions in the sense that even though the function is not submodular, it still provides provable bound for greedy selections.

Also vital to our analysis is the notion of restricted strong concavity and smoothness~\citep{Negahban2012journal, loh2015}.

\begin{defin}[Low Rank Restricted Strong Concavity, Restricted Smoothness] \label{def:RSCRSM}
    A function $\ell : \bbR^{n\times d} \rightarrow \bbR$ is said to be restricted strong concave with parameter $m_\Omega$ and restricted smooth with parameter $M_\Omega$ if for all $\bX,\bY \in \Omega \subset \bbR^{n \times d}$,
    \begin{align*}
    - \frac{m_\Omega}{2} \pnorm{\bY -\bX}{F}^2 &\geq \ell(\bY) - \ell(\bX) - \langle \nabla \ell(\bX) , \bY - \bX \rangle \\
    & \geq  - \frac{M_\Omega}{2} \pnorm{\bY - \bX}{F}^2 .
     \end{align*}
\end{defin}

\begin{rem}
    If a function $\ell(\cdot)$ has restricted strong concavity parameter $m$, then its negative $-\ell(\cdot)$ has restricted strong convexity parameter $m$. We choose to use the nomenclature of concavity for ease of exposition in terms of relationship to submodular maximization. Further, note that we define RSC/RSM conditions on the space of matrices rather than vectors.
\end{rem}

It is straightforward to see that if $\Omega^{\prime} \subseteq \Omega$, then $M_{\Omega^{\prime}} \leq M_{\Omega}$ and $m_{\Omega^{\prime}} \geq m_{\Omega}$. 


\section{Setup}
\label{sec:setup}
In this section, we delineate our setup of low rank estimation. For the sake of convenience of relating to the framework of weak submodular maximization, we operate in the setting of maximization of a concave matrix variate function under a low rank constraint. This is equivalent to minimizing a convex matrix variate function under the low rank constraint as considered by~\citet{ShalevShwartz:2011vi} or under nuclear norm constraint or regularization as considered by~\citet{Jaggi:2010nn}. The goal is to maximize a function $l: \bbR^{n\times d}\rightarrow \bbR $ under a low rank constraint: 

\begin{equation}
\label{eq:matrixLowRank}
\max_{\text{rank}(\bX) \leq r } \ell (\bX).
\end{equation}

Instead of using a convex relaxation of the constrained problem~\eqref{eq:matrixLowRank}, our approach is to enforce the rank constraint directly by adding rank $1$ matrices greedily until $\bX$ is of rank $k$. The rank $1$ matrices to be added are obtained as outer product of vectors from the given vector sets $\cU$ and $\cV$. \emph{e.g.} $\cU:=\{\bx \in \bbR^n  \,s.t.\, \| \bx \|_2 = 1 \}$ and $\cV:= \{\bx \in \bbR^d  \,s.t.\, \| \bx \|_2 = 1 \} $. 

The problem~\eqref{eq:matrixLowRank} can be interpreted in a context of sparsity as long as $\cU$ and $\cV$ are enumerable. For example, by using the SVD theorem, it is known that we can rewrite $\bX $ as $\sum_{i=1}^k \alpha_i \bu_i \bv_i^\top$, where $\forall i$, $\bu_i \in \cU$ and $\bv_i \in \cV$. By using enumeration of the sets $\cU$ and $\cV$ under a finite precision representation of real values, one can rethink of the optimization~\eqref{eq:matrixLowRank} as finding a sparse solution for the infinite dimensional vector $\balpha$~\citep{ShalevShwartz:2011vi}. As a first step, we can define an optimization over \emph{specified} support sets, similar to choosing support for classical sparsity in vectors. For a support set $\sfL$, let $\bU_\sfL$ and $\bV_\sfL$ be the matrices formed by stacking the chosen elements of $\cU$ and $\cV$ respectively. For the support $\sfL$, we can define a set function that maximizes $\ell(\cdot)$ over $\sfL$:

\begin{equation}
\label{eq:fSdefn}
f(\sfL) = \max_{\bH \in \bbR^{|\sfL| \times |\sfL|}} \ell (\bU_\sfL^\top \bH \bV_\sfL) - \ell (\mathbf{0}).
\end{equation} 

We will denote the optimizing matrix for a support set $\sfL$ as $\betal$. In other words, let $\hat{\bH}_\sfL$ be the argmax obtained in~\eqref{eq:fSdefn}, then $\betal \defn \bU^{\top}_{\sfL} \hat{\bH}_\sfL \bV_\sfL$.

 Thus, the low rank matrix estimation problem~\eqref{eq:matrixLowRank} can be reinterpreted as the following equivalent combinatorial optimization problem: 

\begin{equation}
\label{eq:combinatorialLowRank}
\max_{|\sfS| \leq k } f(\sfS).
\end{equation}

\subsection{Algorithms}
\label{sec:algo}
We briefly state the algorithms. Our greedy algorithm is illustrated in Algorithm~\ref{alg:greedy}. The greedy algorithm builds the support set incrementally -- adding a rank $1$ matrix one at a time, so that at iteration $i$ for $1 \leq i \leq k$ the size of the chosen support set and hence rank of the current iterate is $i$. We assume access to a subroutine $\greedysel$ for the greedy selection (Step 4). This subroutine solves an inner optimization problem by calling a subroutine $\greedysel$ which returns an atom $s$ from the candidate support set that ensures
\begin{align*}
f(\sfS_{i-1}^G \cup \{s\}) - f(\sfS_{i-1}^G) \geq  \tau \left( f(\sfS_{i-1}^G \cup \{s^\star\}) - f(\sfS_{i-1}^G) \right),
\end{align*}

where 
\begin{align*}
s^\star \gets \argmax_{ a \in (\cU \times \cV) \perp \sfS_{i-1}^G}  f(\sfS_{i-1}^G \cup \{a\}) - f(\sfS_{i-1}^G).
\end{align*}

In words, the subroutine $\greedysel$ ensures that the gain in $f(\cdot)$ obtained by using the selected atom is within $\tau \in (0,1]$ multiplicative approximation to the atom with the best possible gain in  $f(\cdot)$. The hyperparameter $\tau$ governs a tradeoff allowing a compromise in myopic gain for a possibly quicker selection. 

The greedy selection requires to fit and score every candidate support, which is prohibitively expensive. An alternative is to choose the next atom by using the linear maximization oracle used by Frank-Wolfe~\citep{jaggi13FW} or Matching Pursuit algorithms~\citep{Gribonval:2006ch}. This step replaces Step 4 of Algorithm~\ref{alg:greedy} as illustrated in Algorithm~\ref{alg:omp}. Let $\sfL = \sfS^O_{i-1}$ be the set constructed by the algorithm at iteration $(i-1)$. The linear oracle $\OMPsel$ returns an atom $s$ for iteration $i$ ensuring

\begin{align*}
\inprod{\nabla \ell (\betal )}{ \bu_s \bv_s^\top} \geq \tau \max_{(\bu, \bv) \in  (\cU \times \cV) \perp \sfS_{i-1}^O} \inprod{\nabla \ell (\betal )}{ \bu \bv^\top} .
\end{align*}

The linear problem $\OMPsel$ can be considerably faster that $\greedysel$. $\OMPsel$\, reduces to finding the left and right singular vectors of $\nabla \ell(\betal)$ corresponding to its largest singular value, which is $O(\frac{t}{1 - \tau} (\log n + \log d))$, where $t$ is the number of non-zero entries of $\nabla \ell(\betal)$. 

Algorithm~\ref{alg:omp} is the same as considered by~\citet{ShalevShwartz:2011vi} as GECO (Greedy Efficient Component Optimization). However, as we shall see, our analysis provides stronger bounds than their Theorem 2.

\begin{algorithm}[H]
    \caption{\func($\cU$, $\cV$, $k$, $\tau$)}
   \label{alg:greedy}
\begin{algorithmic}[1]
	\STATE {\bfseries Input:} sparsity parameter $k$, vector sets $\cU$, $\cV$
	\STATE $\sfS_0^G \gets \emptyset$
	\FOR{$i=1\ldots k$}
	\STATE $s \gets \greedysel(\tau)$
	\STATE $\sfS_i^G \gets \sfS_{i-1}^G \cup \{s\}$
	\ENDFOR
	\STATE {\bfseries return} $\sfS_k^G$, $\bB^{(\sfS_k^G)}$, $f(\sfS_k^G)$.
\end{algorithmic}
\end{algorithm}

\begin{algorithm}[h]
    \caption{\geco($\cU$, $\cV$, $k$, $\tau$)}
   \label{alg:omp}
\begin{algorithmic}
\def\NoNumber#1{{\def\alglinenumber##1{}\STATE #1}}
	\STATE { same as Algorithm~\ref{alg:greedy} except}
	\STATE \hspace{-0.5em}{\small4:} $\,s \gets \OMPsel(\tau)$
\end{algorithmic}
\end{algorithm}

\begin{rem}
We note that Step 5 of Algorithms~\ref{alg:greedy},\ref{alg:omp} requires solving the RHS of~\eqref{eq:fSdefn} which is a matrix variate problem of size $i^2$ at iteration $i$. This refitting is equivalent to the ``fully-corrective" versions of Frank-Wolfe/Matching Pursuit algorithms which, intuitively speaking, extract out all the information w.r.t $\ell(\cdot)$ from the chosen set of atoms, thereby ensuring that the next rank $1$ atom chosen has row and column space orthogonal to the previously chosen atoms. Thus the constrained maximization on the orthogonal complement of $\sfS^{G}_i$ in subroutines $\OMPsel$ and $\greedysel$ need not be explicitly enforced, but is still shown for clarity.
\end{rem}

\section{Analysis}
\label{sec:analysis}

In this section, we prove that low rank matrix optimization over
the rank one atoms satisfies weak submodularity. This helps us bound the function value  obtained till $k$ greedy iterations vis-a-vis the function value at the optimal $k$ sized selection. 

We explicitly delineate some notation and assumptions. With slight abuse of notation, we assume $\ell(\cdot)$ is $m_i$-strongly concave and $M_i$-smooth over matrices of rank $i$. For $i \leq j$, note that $m_i \geq m_j$ and $M_i \leq M_j$. Additionally, let $\tilde{\Omega} := \{ (\bX,\bY): \text{rank}(\bX- \bY) \leq 1 \}$, and assume $\ell(\cdot)$ is $\Mo$-smooth over $\tilde{\Omega}$. It is easy to see $\Mo \leq M_1$.

Since we obtain approximation bounds for the greedy algorithm similar to ones obtained using classical methods, we must also mention the corresponding assumptions in the submodular literature that further draws parallels to our analysis. Submodularity guarantees that greedy maximization of \emph{monotone} \emph{normalized} functions yields a $(1-\nicefrac{1}{e})$ approximation. Since we are doing support selection, increasing the support size does not decrease the function value. Hence the set function we consider is monotone. Further, we also subtract $\ell(\mathbf{0})$ to make sure $f(\emptyset)=0$ (see~\eqref{eq:fSdefn}) so that our set function is also normalized. We shall see that our bounds are of similar flavor as the classical submodularity bound.

As the first step, we prove that if the
low rank RSC holds, then the submodularity ratio (Definition~\ref{def:submodularityRatio}) is lower-bounded by
the inverse condition number. 

\begin{theorem}
\label{thm:rscImpliesSubmod}
  Let $\sfL$ be a set of $k$ rank $1$ atoms and $\sfS$
be a set of $r$ rank $1$ atoms where we sequentially orthogonalize the
atoms against $\sfL$. If  $\,\ell(\cdot)$ is $m_i$-strongly concave over matrices of rank $i$, and $\Mo$-smooth over the set $\tilde{\Omega}:=\{(\bX,\bY): \text{rank}(\bX-\bY)=1\}$, then
  \begin{equation*}
    \gamma_{\sfL,r} \defn \frac{\sum_{a \in S} [f(\sfL \cup \{a\}) - f(\sfL)]}{f(\sfL \cup \sfS) - f(\sfL)} \geq \frac{m_{r+k}}{\Mo}.
  \end{equation*}
\end{theorem}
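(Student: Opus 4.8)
The plan is to bound the numerator from below and the denominator from above, arranging both bounds so that they share a common scalar built from the gradient $\bg \defn \nabla\ell(\betal)$ evaluated on the orthogonalized atoms of $\sfS$. The starting point is first-order optimality: since $\betal = \bU_\sfL^\top\hat{\bH}_\sfL\bV_\sfL$ maximizes $\ell$ over the bilinear span of $\sfL$, we have $\bU_\sfL\,\bg\,\bV_\sfL^\top = 0$, i.e. $\bg$ is Frobenius-orthogonal to every matrix supported on $\sfL$. This is the analogue of ``the gradient vanishes on the chosen support'' from the classical vector setting, and it lets the $\sfL$-components drop out of every inner product below.

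For the numerator, fix $a=(\bu_a,\bv_a)\in\sfS$. Since $\betal + \eta\,\bu_a\bv_a^\top$ is feasible for the inner problem defining $f(\sfL\cup\{a\})$ and its displacement from $\betal$ has rank one, $\Mo$-smoothness on $\tilde{\Omega}$ gives
\begin{equation*}
f(\sfL\cup\{a\}) - f(\sfL) \;\geq\; \max_{\eta}\Big[\eta\,\inprod{\bg}{\bu_a\bv_a^\top} - \tfrac{\Mo}{2}\eta^2\Big] \;=\; \frac{\inprod{\bg}{\bu_a\bv_a^\top}^2}{2\Mo}.
\end{equation*}
Summing over $a\in\sfS$ lower-bounds the numerator by $\tfrac{1}{2\Mo}\sum_{a\in\sfS}\inprod{\bg}{\bu_a\bv_a^\top}^2$.

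For the denominator I would use $m_{r+k}$-strong concavity between $\betal$ and $\betals$. Since $\betals-\betal$ is supported on $\sfL\cup\sfS$ and hence has rank at most $r+k$,
\begin{equation*}
f(\sfL\cup\sfS) - f(\sfL) = \ell(\betals)-\ell(\betal) \leq \inprod{\bg}{\betals-\betal} - \frac{m_{r+k}}{2}\pnorm{\betals-\betal}{F}^2 .
\end{equation*}
Here the sequential orthogonalization of $\sfS$ against $\sfL$ is meant to do the decisive work: it makes $\{\bu_a\bv_a^\top\}_{a\in\sfS}$ an orthonormal family orthogonal to the span of $\sfL$, so that (using first-order optimality to discard the $\sfL$-part) $\inprod{\bg}{\betals-\betal}$ only sees the diagonal coefficients $\inprod{\bg}{\bu_a\bv_a^\top}$. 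Cauchy--Schwarz then gives $\inprod{\bg}{\betals-\betal}^2 \leq \pnorm{\betals-\betal}{F}^2\sum_{a\in\sfS}\inprod{\bg}{\bu_a\bv_a^\top}^2$, and maximizing the displayed quadratic over the scale of $\betals-\betal$ yields the matching bound $\tfrac{1}{2m_{r+k}}\sum_{a\in\sfS}\inprod{\bg}{\bu_a\bv_a^\top}^2$. Dividing the numerator bound by the denominator bound cancels the common sum and leaves $\gamma_{\sfL,r}\geq m_{r+k}/\Mo$.

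The step I expect to be the main obstacle is exactly where the orthogonalization is invoked. In the matrix setting the inner optimization defining $f(\sfL\cup\sfS)$ ranges over the \emph{full} bilinear span of $\sfL\cup\sfS$, which contains cross atoms $\bu_a\bv_b^\top$ with $a\neq b$ that never appear in the per-atom numerator terms. If $\bg$ carried mass on these cross directions the joint gain could strictly exceed the sum of individual gains and the ratio could collapse (indeed it does for a generic orthonormal $\sfS$, even when $m=\Mo$). The real content of ``sequentially orthogonalize against $\sfL$'' must therefore be to align the atoms with the singular directions of the residual gradient, forcing $\inprod{\bg}{\bu_a\bv_b^\top}=0$ for $a\neq b$ and killing all $\sfL$--$\sfS$ cross terms as well. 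Verifying that this alignment genuinely makes $\betals-\betal$ captured by the diagonal atoms — reducing the matrix problem to the decoupled, vector-like computation above — is the crux of the argument and the place where low-rank structure differs essentially from classical sparse regression.
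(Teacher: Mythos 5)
Your high-level plan --- lower-bound the numerator atom by atom via $\Mo$-smoothness, upper-bound the denominator via $m_{r+k}$-strong concavity, and cancel a common gradient quantity --- is the same as the paper's, but there is a genuine gap, and it sits exactly where you flagged it; the problem is that the mechanism you hope will close it cannot work. Write $\bg := \nabla\ell(\betal)$. First-order optimality of the refit on $\sfL$ kills only the $\sfL$--$\sfL$ block of $\bg$. Of the remaining off-diagonal mass, the $\sfS$--$\sfS$ off-diagonal terms $\inprod{\bg}{\bu_a\bv_{a'}^\top}$, $a\neq a'\in\sfS$, can indeed be made to vanish by rotating the atoms inside their spans so as to align with the singular directions of $P_{\bU_\sfS}\bg P_{\bV_\sfS}$ (that part of your intuition is right, and it is what the paper implicitly relies on). But the $\sfL$--$\sfS$ cross terms cannot be removed by any such choice: the quantity $\pnorm{P_{\bU_\sfS}\bg P_{\bV_\sfL}}{F}^2 + \pnorm{P_{\bU_\sfL}\bg P_{\bV_\sfS}}{F}^2$ depends only on the subspaces spanned by the atoms, which the sequential orthogonalization has already fixed, not on the orthonormal basis chosen inside them, and it is generically nonzero (already for quadratic $\ell$). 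Hence $\inprod{\bg}{\betals-\betal}$ does \emph{not} reduce to the diagonal coefficients, your Cauchy--Schwarz step fails, and your denominator bound $\frac{1}{2m_{r+k}}\sum_{a\in\sfS}\inprod{\bg}{\bu_a\bv_a^\top}^2$ is false in general. Quantitatively, with $D:=\sum_{a\in\sfS}\inprod{\bg}{\bu_a\bv_a^\top}^2$ and $X$ the cross mass above, the valid versions of your two bounds are numerator $\geq \frac{1}{2\Mo}D$ and denominator $\leq \frac{1}{2m_{r+k}}(D+X)$, which only gives $\gamma_{\sfL,r}\geq \frac{m_{r+k}}{\Mo}\cdot\frac{D}{D+X}$, strictly short of the claim whenever $X>0$.

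The paper closes this hole in the numerator, not the denominator. When lower-bounding $f(\sfL\cup\{a\})-f(\sfL)$ it does not perturb $\betal$ only along $\bu_a\bv_a^\top$ as you do; it perturbs along $\bu_a\bv_a^\top$ \emph{and} all cross atoms $\bu_a\bv_j^\top$, $\bu_j\bv_a^\top$ for $j\in\sfL$ --- each of these lies in the bilinear span of $\sfL\cup\{a\}$, hence is feasible for the inner refit defining $f(\sfL\cup\{a\})$, and each is a rank-one direction so $\Mo$-smoothness applies --- and then optimizes every coefficient, yielding
\begin{equation*}
f(\sfL\cup\{a\}) - f(\sfL) \;\geq\; \frac{1}{2\Mo}\Bigl[\inprod{\bg}{\bu_a\bv_a^\top}^2 + \sum_{j\in\sfL}\bigl(\inprod{\bg}{\bu_a\bv_j^\top}^2 + \inprod{\bg}{\bu_j\bv_a^\top}^2\bigr)\Bigr].
\end{equation*}
Summing over $a\in\sfS$, the numerator then collects $\frac{1}{2\Mo}(D+X)$ --- the same cross mass that inflates the denominator --- so the two cancel in the ratio and $\nicefrac{m_{r+k}}{\Mo}$ survives. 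The extra freedom of the per-atom refit over the full bilinear span, which your rank-one perturbation throws away, is not a convenience but the essential device that absorbs the $\sfL$--$\sfS$ cross terms; once you adopt this richer per-atom bound (and the basis alignment to kill the $\sfS$--$\sfS$ off-diagonal terms), you have reproduced the paper's argument.
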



The proof of Theorem~\ref{thm:rscImpliesSubmod} is structured around individually obtaining a lower bound for the numerator and an upper bound for the denominator of the submodularity ratio by exploiting the concavity and convexity conditions. Bounding the submodularity ratio is crucial to obtaining the approximation bounds for the greedy algorithm as we shall see in the sequel.

\subsection{Greedy Improvement}
\label{sec:improve}
In this section, we obtain approximation guarantees for Algorithm~\ref{alg:greedy}. 

\begin{theorem}
\label{thm:approxGreedy}
Let $\sfS:=\sfS_k^G$ be the greedy solution set obtained by running Algorithm~\ref{alg:greedy} for $k$ iterations, and let $\sfS^\star$ be an optimal support set of size $r$. Let $\ell(\cdot)$ be $m_i$ strongly concave on the set of matrices with rank less than or equal to $i$, and $\Mo$ smooth on the set of matrices in the set $\tilde{\Omega}$. Then, 
\begin{align*}
f(\sfS) &\geq ( 1 - \frac{1}{e^{ c_1} }) f(\sfS^\star)  \\
&\geq ( 1 - \frac{1}{e^{c_2} }) f(\sfS^\star) ,
\end{align*}
where $c_1 = \tau \gamma_{\sfS,r} \frac{k  }{r} $ and $c_2 = { \tau \frac{m_{r+k}}{\Mo} \frac{k  }{r} }$. 
\end{theorem}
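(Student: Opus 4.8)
The plan is to follow the standard greedy analysis for weakly submodular maximization, tracking the residual gap to the optimum and showing it contracts geometrically at each iteration. Write $\delta_i := f(\sfS^\star) - f(\sfS_i^G)$ for the gap after $i$ greedy steps. Since $f$ is normalized, $f(\emptyset)=0$ and hence $\delta_0 = f(\sfS^\star)$, so the goal reduces to bounding $\delta_k$ from above.

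First I would establish the per-iteration progress inequality. Fix iteration $i$ and set $\sfL = \sfS_{i-1}^G$. The $\tau$-approximate greedy selection guarantees that the chosen atom $s$ satisfies $f(\sfL \cup \{s\}) - f(\sfL) \geq \tau \max_a [f(\sfL \cup \{a\}) - f(\sfL)]$, where the maximum ranges over candidate atoms orthogonal to $\sfL$. Since a maximum dominates an average over any fixed subfamily, and the orthogonalized atoms of $\sfS^\star$ form such a subfamily, I would lower bound
\begin{equation*}
\max_a [f(\sfL \cup \{a\}) - f(\sfL)] \geq \frac{1}{r} \sum_{a \in \sfS^\star} [f(\sfL \cup \{a\}) - f(\sfL)].
\end{equation*}
By Definition~\ref{def:submodularityRatio}, the summed marginal gains are at least $\gamma_{\sfL,\sfS^\star}\,[f(\sfL \cup \sfS^\star) - f(\sfL)]$, and monotonicity of $f$ gives $f(\sfL \cup \sfS^\star) \geq f(\sfS^\star)$. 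Chaining these yields
\begin{equation*}
f(\sfS_i^G) - f(\sfS_{i-1}^G) \geq \frac{\tau \gamma_{\sfL,\sfS^\star}}{r}\big( f(\sfS^\star) - f(\sfL) \big) = \frac{\tau \gamma_{\sfL,\sfS^\star}}{r}\,\delta_{i-1}.
\end{equation*}

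The next step is to make the contraction factor uniform across iterations. Because every intermediate set satisfies $\sfS_{i-1}^G \subseteq \sfS_k^G = \sfS$ and $|\sfS^\star| \leq r$, the definition of $\gamma_{\sfS,r}$ as a minimum over exactly such $(\sfL,\sfS^\star)$ pairs gives $\gamma_{\sfS_{i-1}^G,\sfS^\star} \geq \gamma_{\sfS,r}$. Rewriting the progress bound as $\delta_i \leq \big(1 - \tfrac{\tau \gamma_{\sfS,r}}{r}\big)\delta_{i-1}$ and telescoping over $i = 1,\dots,k$ gives $\delta_k \leq \big(1 - \tfrac{\tau \gamma_{\sfS,r}}{r}\big)^k \delta_0$. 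Applying $1 - x \leq e^{-x}$ together with $\delta_0 = f(\sfS^\star)$ yields the first inequality with $c_1 = \tau \gamma_{\sfS,r}\tfrac{k}{r}$. The second inequality is then immediate from Theorem~\ref{thm:rscImpliesSubmod}, which supplies $\gamma_{\sfS,r} \geq \tfrac{m_{r+k}}{\Mo}$, since $c \mapsto 1 - e^{-c}$ is increasing.

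The step I expect to require the most care is the disjointness and orthogonality bookkeeping needed to legitimately invoke the submodularity ratio. The quantity $\gamma_{\sfL,\sfS^\star}$ presumes $\sfL$ and $\sfS^\star$ are disjoint, whereas the greedy and optimal supports may overlap as atom sets; the resolution is precisely the sequential orthogonalization of the atoms of $\sfS^\star$ against $\sfS_{i-1}^G$ built into the hypotheses of Theorem~\ref{thm:rscImpliesSubmod}, which both enforces effective disjointness and is what lets the condition-number bound on $\gamma$ apply uniformly. Verifying that this orthogonalization does not spoil the averaging argument, i.e.\ that the inequality $f(\sfL \cup \sfS^\star) \geq f(\sfS^\star)$ survives for the orthogonalized optimum, is the one place where the low-rank structure rather than pure set-function combinatorics enters, and is where I would spend the most attention.
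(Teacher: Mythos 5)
Your proposal is correct and follows essentially the same route as the paper: a per-iteration progress bound obtained from the $\tau$-approximate selection, averaging over the sequentially orthogonalized atoms of $\sfS^\star$, the submodularity ratio, and the span-containment fact $f(\sfL \cup \sfS^R) \geq f(\sfS^\star)$, followed by geometric telescoping and $1-x \leq e^{-x}$, with the second inequality supplied by Theorem~\ref{thm:rscImpliesSubmod}. If anything, your explicit use of the monotonicity $\gamma_{\sfS_{i-1}^G,r} \geq \gamma_{\sfS,r}$ (since $\sfS_{i-1}^G \subseteq \sfS$) to make the contraction factor uniform across iterations is handled more carefully than in the paper's own write-up, which leaves that step implicit.
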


The proof technique for the first inequality of
Theorem~\ref{thm:approxGreedy} relies on lower bounding the progress
made in each iteration of Algorithm~\ref{alg:greedy}. Intuitively, it
exploits the weak submodularity to make sure that each iteration makes
\emph{enough} progress, and then applying an induction argument for
$r$ iterations. We also emphasize that the bounds in
Theorem~\ref{thm:approxGreedy} are for \emph{normalized} set function
$f(\cdot)$ (which means $f(\emptyset) = 0$). A more detailed proof is
presented in the appendix.

\begin{rem}
Theorem~\ref{thm:approxGreedy} provides the approximation guarantees
for running the greedy selection algorithm up to $k$ iterations to obtain a rank $k$ matrix iterate \emph{vis-a-vis} the best rank $r$ approximation. For $r=k$, and $\tau=1$, we get an approximation bound $(1-{e^{ \nicefrac{-m}{M}}})$ which is reminiscent of the greedy bound of $(1 - \nicefrac{1}{e})$ under the framework of submodularity. Note that our analysis can not be used to establish classical submodularity. However, establishing weak submodularity that lower bounds $\gamma$ is sufficient to provide slightly weaker than classical submodularity guarantees. 
\end{rem}

\begin{rem}
Theorem~\ref{thm:approxGreedy} implies that to obtain $(1 - \epsilon)$ approximation guarantee in the worst case, running Algorithm~\ref{alg:greedy} for $k = \frac{rM}{m \tau} \log \frac{1}{\epsilon}) = O(r \log \nicefrac{1}{\epsilon})$ iterations suffices. This is useful when the application allows a tradeoff: compromising on the low rank constraint a little to achieve tighter approximation guarantees. 
\end{rem}

\begin{rem}
\citet{Kempe:2011ue} considered the special case of greedily maximizing $R^2$ statistic for linear regression, which corresponds to classical sparsity in vectors. They also obtain a bound of $(1 - \nicefrac{1}{e^\gamma})$, where $\gamma$ is the submodularity ratio for their respective setup. This was generalized by~\citet{Elenberg:2016} to general concave functions under sparsity constraints. Our analysis is for the low rank constraint, as opposed to sparsity in vectors that was considered by them.
\end{rem}

\subsection{GECO Improvement}
In this section, we obtain the approximation guarantees for Algorithm~\ref{alg:omp}. The greedy search over the infinitely many candidate atoms is infeasible, especially when $\tau=1$. Thus while Algorithm~\ref{alg:greedy} establishes interesting theoretical connections with submodularity, it is, in general, not practical. To obtain a tractable and practically useful algorithm, the greedy search is replaced by a Frank Wolfe or Matching Pursuit style linear optimization which can be easily implemented as finding the top singular vectors of the gradient at iteration $i$. In this section, we show that despite the speedup, we lose very little in terms of approximation guarantees. In fact, if the approximation factor $\tau$ in  \OMPsel() is $1$, we get the same bounds as those obtained for the greedy algorithm. 

We now present our main result for Algorithm~\ref{alg:omp}.

\begin{theorem}
\label{thm:approxOMP}
Let $\sfS:= \sfS_k^O$ be the greedy solution set obtained using Algorithm~\ref{alg:omp} for $k$ iterations, and let $\sfS^\star$ be the optimum size $r$ support set. Let $\ell(\cdot)$ be $m_{r+k}$ strongly concave on the set of matrices with rank less than or equal to $(r+k)$, and $\Mo$ smooth on the set of matrices with rank in the set $\tilde{\Omega}$. Then, 
\begin{align*}
f(\sfS) & \geq ( 1 - \frac{1}{e^{ c_3} }) f(\sfS^\star) ,
\end{align*}

where $c_3 = \tau^2 \frac{m_{r+k}}{\Mo} \frac{k  }{r} $.
\end{theorem}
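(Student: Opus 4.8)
The plan is to reproduce the contraction-and-telescoping argument behind Theorem~\ref{thm:approxGreedy}, but with the value-based per-iteration gain replaced by a \emph{gradient-based} gain tailored to the singular-vector selection performed by $\OMPsel$. Writing $\sfL=\sfS_{i-1}^O$ for the support at the start of iteration $i$, $\betal$ for its refit, and $G:=\nabla\ell(\betal)$, the goal is a one-step inequality of the form $f(\sfS_i^O)-f(\sfS_{i-1}^O)\ge \frac{\tau^2 m_{r+k}}{\Mo\,r}\big(f(\sfS^\star)-f(\sfS_{i-1}^O)\big)$; iterating this $k$ times and using $1-x\le e^{-x}$ then yields exactly $f(\sfS)\ge(1-e^{-c_3})f(\sfS^\star)$ with $c_3=\tau^2\frac{m_{r+k}}{\Mo}\frac{k}{r}$. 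The single factor $\tau$ of Theorem~\ref{thm:approxGreedy} becomes $\tau^2$ here because the $\OMPsel$ guarantee controls the inner product $\inprod{G}{\bu\bv^\top}$, which enters the progress bound quadratically.

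First I would establish the per-step progress bound. Since the refit defining $f(\sfL\cup\{s\})$ maximizes $\ell$ over the entire span of $\sfL\cup\{s\}$, it does at least as well as the scalar line search $\max_\eta \ell(\betal+\eta\,\bu_s\bv_s^\top)$; because $\bu_s\bv_s^\top$ is a rank-one perturbation we may invoke $\Mo$-smoothness on $\tilde\Omega$ and optimize over $\eta$ to obtain
\begin{equation*}
f(\sfL\cup\{s\})-f(\sfL)\ \ge\ \frac{1}{2\Mo}\,\inprod{G}{\bu_s\bv_s^\top}^2\ \ge\ \frac{\tau^2}{2\Mo}\,\sigma_1(G)^2 ,
\end{equation*}
where $\sigma_1(G)$ is the largest singular value of $G$, and the second inequality is the $\OMPsel$ guarantee together with the fact that $\OMPsel$ returns an (approximate) top singular pair of $G$.

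The crux is to lower bound $\sigma_1(G)^2$ by a $1/r$ fraction of the remaining gap. I would first record first-order optimality of the refit, $\bU_\sfL G\bV_\sfL^\top=0$, which gives $\inprod{G}{\betal}=0$. Then, using monotonicity $f(\sfS^\star)\le f(\sfL\cup\sfS^\star)$ and $m_{r+k}$-strong concavity along the direction $\bB^{(\sfL\cup\sfS^\star)}-\betal$ (which has rank at most $r+k$), one gets
\begin{equation*}
f(\sfS^\star)-f(\sfL)\ \le\ \inprod{G}{\bB^{(\sfL\cup\sfS^\star)}-\betal}-\tfrac{m_{r+k}}{2}\pnorm{\bB^{(\sfL\cup\sfS^\star)}-\betal}{F}^2 .
\end{equation*}
Replacing the $r$ atoms of $\sfS^\star$ by their orthogonalization against $\sfL$ (as set up in Theorem~\ref{thm:rscImpliesSubmod}) confines the surviving gradient block to a subspace of rank $O(r)$, so that the inner product is bounded by $O(\sqrt r)\,\sigma_1(G)\,\pnorm{\bB^{(\sfL\cup\sfS^\star)}-\betal}{F}$; maximizing the resulting quadratic in the Frobenius norm yields $\sigma_1(G)^2\ge\frac{2m_{r+k}}{r}\big(f(\sfS^\star)-f(\sfL)\big)$. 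Substituting this into the progress bound produces the one-step contraction, and telescoping over $i=1,\dots,k$ finishes the argument.

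I expect this last step — the passage from the optimality gap to $\sigma_1(G)^2$ — to be the main obstacle. Unlike the vector (sparse-regression) setting, where the fitted coordinates make the gradient vanish exactly on the support and the comparison direction lives on the complementary coordinates with \emph{no} cross terms, here the ``support'' is a product of a column space and a row space: refit optimality annihilates only the $\sfL\times\sfL$ block of $G$, while $\bB^{(\sfL\cup\sfS^\star)}-\betal$ generically carries mass in the mixed blocks. Obtaining the clean constant $1/r$ (rather than $1/(r+k)$ or a constant-factor loss) therefore hinges on the orthogonalization of $\sfS^\star$ against $\sfL$ and on a careful rank count showing the effective gradient block has rank $O(r)$; everything else — the smoothness step, the ``max dominates the average'' bound for $\sigma_1$, and the telescoping — is routine.
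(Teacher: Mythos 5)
Your proposal follows essentially the same route as the paper's proof (Lemma~\ref{lem:ompimprove} followed by the telescoping argument of Theorem~\ref{thm:approxGreedy}): the paper likewise lower bounds the per-iteration gain by $\frac{\tau^2}{2\Mo}\|\nabla\ell(\betal)\|_2^2$ (it plugs the explicit step size $\alpha=\frac{\tau}{\Mo}\|\nabla\ell(\betal)\|_2$ into the rank-one smoothness bound, which is the same as your line search), and then converts $\|\nabla\ell(\betal)\|_2^2$ into a $\frac{2m_{r+k}}{r}$ fraction of the remaining gap by combining the rank-$r$ bound $\|\nabla\ell(\betal)\|_2^2\ge\frac{1}{r}\|P_{\bU_{\sfS^R}}\nabla\ell(\betal)P_{\bV_{\sfS^R}}\|_F^2$ with the strong-concavity (``denominator'') bound established inside the proof of Theorem~\ref{thm:rscImpliesSubmod}. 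Your duality-plus-quadratic-maximization computation is that same calculation performed in a different order, so architecturally nothing is new or missing.

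The one point of divergence is exactly the obstacle you flag, and you are right that it is the crux. Refit optimality annihilates only the $\sfL\times\sfL$ block of $G=\nabla\ell(\betal)$, so after orthogonalizing $\sfS^\star$ against $\sfL$ the surviving part of the gradient consists of three blocks ($\sfL\times\sfS^R$, $\sfS^R\times\sfL$, and $\sfS^R\times\sfS^R$), each of rank at most $r$; your Cauchy--Schwarz/rank bound then gives $\inprod{G}{\Delta}\le\sqrt{3r}\,\sigma_1(G)\|\Delta\|_F$, hence $\sigma_1(G)^2\ge\frac{2m_{r+k}}{3r}\left(f(\sfS^\star)-f(\sfL)\right)$, which proves the theorem with $c_3$ replaced by $c_3/3$ --- not the stated constant. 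The paper obtains the clean $1/r$ only because the proof of Theorem~\ref{thm:rscImpliesSubmod} asserts that the maximization over the span of $\sfL\cup\sfS$ reduces to the $\sfS\times\sfS$ block of the gradient alone (``the last equality holds because $\inprod{\nabla\ell(\betal)}{P_{\bU_\sfL}\bX P_{\bV_\sfL}-\betal}=0$''); that assertion disposes of the $\sfL\times\sfL$ block but silently drops the two mixed blocks, which are not killed by first-order optimality of the refit. So the gap you identify is genuine, but it is shared with the paper: your honest accounting yields the theorem with a constant-factor-weaker exponent, and recovering the paper's exact constant requires justifying precisely the block-vanishing claim that the paper states without proof.
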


The proof of Theorem~\ref{thm:approxOMP} follows along the lines of Theorem~\ref{thm:approxGreedy}. The central idea is similar - to exploit the RSC conditions to make sure that each iteration makes \emph{sufficient} progress, and then provide an induction argument for $r$ iterations. Unlike the greedy algorithm, however, using the weak submodularity is no longer required. Note that the bound obtained in Theorem~\ref{thm:approxOMP} is similar to Theorem~\ref{thm:approxGreedy}, except the exponent on the approximation factor $\tau$. 

\begin{rem}
Our proof technique for Theorem~\ref{thm:approxOMP} can be applied for classical sparsity to improve the bounds obtained by~\citet{Elenberg:2016} for OMP for support selection under RSC, and by~\citet{Kempe:2011ue} for $R^2$ statistic. If $\tau=1, r = k$, their bounds involve terms of the form $O(\nicefrac{m^2}{M^2})$ in the exponent, as opposed to our bounds which only has $\nicefrac{m}{M}$ in the exponent.
\end{rem}

\section{Recovery Guarantees}

While understanding approximation guarantees are useful, providing parameter recovery bounds can further help us understand the practical utility of the greedy algorithm. In this section, we present a general theorem that provides us with recovery bounds of the true underlying low rank structure. 

\begin{theorem}
\label{thm:recovery}
Suppose that an algorithm achieves the approximation guarantee:

\begin{equation*}
f(\sfS_k) \geq C_{r,k} f(\sfS^\star_r),
\end{equation*}

where $\sfS_k$ is the set of size $k$ at iteration $k$ of the algorithm, $\sfS^\star_r$ be the optimal solution for $r$-cardinality constrained maximization of $f(\cdot)$, and $C_{r,k}$ be the corresponding approximation ratio guaranteed by the algorithm. Recall that we represent by $\bU_\sfS,\bV_\sfS$ the matrices formed by stacking the vectors represented by the support set $\sfS$ chosen from $\cU,\cV$  respectively, s.t. $| \sfS| =r$. Then under $m_{k+r}$ RSC,  with $\bB_\sfr = \bU_\sfS^\top \bH \bV_\sfS$ for any $\bH \in \bbR^{r \times r} $, we have 

\begin{align*}
\| \bB^{(\sfS_k)} -  \bB_\sfr \|_F^2  & \leq    4(k+r)  \frac{ \| \nabla \ell (\bB_\sfr)  \|^2_2}{m_{k+r}^2} \\
&+  \frac{4(1 - C_{r,k} )}{m_{k+r}} [  \ell (\bB_\sfr)- \ell(\mathbf{0})]
\end{align*}

\end{theorem}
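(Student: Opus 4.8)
The plan is to convert the function-value approximation guarantee into a Frobenius-norm bound by invoking restricted strong concavity exactly once, anchored at $\bB_\sfr$. Write $\bB_k := \bB^{(\sfS_k)}$ for the algorithm's iterate. Since $\bB_k$ has rank at most $k$ and $\bB_\sfr$ has rank at most $r$, their difference has rank at most $k+r$, so the pair lies in the regime on which the $m_{k+r}$ concavity parameter is valid. Applying Definition~\ref{def:RSCRSM} with $\bX = \bB_\sfr$ and $\bY = \bB_k$ and rearranging gives
\begin{equation*}
\frac{m_{k+r}}{2}\,\|\bB_k - \bB_\sfr\|_F^2 \;\leq\; \big[\ell(\bB_\sfr) - \ell(\bB_k)\big] + \inprod{\nabla\ell(\bB_\sfr)}{\bB_k - \bB_\sfr}.
\end{equation*}
The rest of the argument bounds the two right-hand terms separately.

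For the function-value gap, I would use that $f(\sfS_k) = \ell(\bB_k) - \ell(\mathbf{0})$, while $\ell(\bB_\sfr) - \ell(\mathbf{0}) \leq f(\sfS)$ for the size-$r$ support $\sfS$ underlying $\bB_\sfr$, because $f(\sfS)$ maximizes over the inner matrix $\bH$ in \eqref{eq:fSdefn}; by optimality of $\sfS^\star_r$ this is at most $f(\sfS^\star_r)$. Combining with the hypothesis $f(\sfS_k) \geq C_{r,k}\, f(\sfS^\star_r)$ yields
\begin{equation*}
\ell(\bB_\sfr) - \ell(\bB_k) \;\leq\; \big[\ell(\bB_\sfr) - \ell(\mathbf{0})\big] - C_{r,k}\, f(\sfS^\star_r) \;\leq\; (1 - C_{r,k})\big[\ell(\bB_\sfr) - \ell(\mathbf{0})\big],
\end{equation*}
which produces the second term of the claimed bound.

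For the linear term I would pass to trace duality, $\inprod{\nabla\ell(\bB_\sfr)}{\bB_k - \bB_\sfr} \leq \|\nabla\ell(\bB_\sfr)\|_2\,\|\bB_k - \bB_\sfr\|_*$, and then invoke $\|\bB_k - \bB_\sfr\|_* \leq \sqrt{k+r}\,\|\bB_k - \bB_\sfr\|_F$, which holds precisely because the difference has rank at most $k+r$. Substituting both estimates into the displayed inequality, applying Young's inequality $ab \leq \tfrac{m_{k+r}}{4}a^2 + \tfrac{1}{m_{k+r}}b^2$ with $a = \|\bB_k - \bB_\sfr\|_F$ and $b = \sqrt{k+r}\,\|\nabla\ell(\bB_\sfr)\|_2$ to absorb one copy of $\|\bB_k - \bB_\sfr\|_F^2$ into the left-hand side, and finally multiplying through by $4/m_{k+r}$ delivers exactly the stated estimate.

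Each step is individually routine; the only real subtlety is the first bounding step. One must remember that the approximation guarantee compares $f(\sfS_k)$ to $f(\sfS^\star_r)$, not directly to $\ell(\bB_\sfr) - \ell(\mathbf{0})$, so the chain $\ell(\bB_\sfr) - \ell(\mathbf{0}) \leq f(\sfS) \leq f(\sfS^\star_r)$ (maximality over $\bH$ and over size-$r$ supports) is what makes the two quantities comparable. Care is also needed to match the rank accounting to the correct concavity index $m_{k+r}$ and to the duality constant $\sqrt{k+r}$. Provided $\ell(\bB_\sfr) \geq \ell(\mathbf{0})$, the resulting right-hand side is nonnegative and the bound is meaningful.
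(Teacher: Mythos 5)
Your proposal is correct and follows essentially the same route as the paper's own proof: restricted strong concavity anchored at $\bB_\sfr$, the approximation guarantee converted into the gap bound $(1-C_{r,k})[\ell(\bB_\sfr)-\ell(\mathbf{0})]$ via the chain $\ell(\bB_\sfr)-\ell(\mathbf{0}) \leq f(\sfS) \leq f(\sfS^\star_r)$, the rank-$(k+r)$ H\"older/trace-duality step with the $\sqrt{k+r}$ factor, and the same Young's inequality to absorb the Frobenius term. The only difference is cosmetic: you spell out the comparison between $\ell(\bB_\sfr)-\ell(\mathbf{0})$ and $f(\sfS^\star_r)$, which the paper leaves implicit.
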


Theorem~\ref{thm:recovery} can be applied for $\bB_\sfr  = \bB^{(\sfS_r^\star)}$, which is the argmax for maximizing $\ell(\cdot)$ under the low rank constraint. It is general - in the sense that it can be applied for getting recovery bounds from approximation guarantees for any algorithm, and hence is applicable for both Algorithms~\ref{alg:greedy} and~\ref{alg:omp}. 

For specific function $\ell(\cdot)$ and statistical model, statistical recovery guarantees guarantees can be obtained from Theorem~\ref{thm:recovery} for specific $\ell(\cdot)$ and statistical model, Consider the case of low rank matrix estimation from linear measurements. Say $\bX_i \in \bbR^{m_1 \times m_2}$ for $i \in [n]$ are generated so that each entry of $\bX_i$ is $\cN(0,1)$. We observe $\by_i = \inprod{\bX_i}{\Theta^\star} + \varepsilon$, where $\Theta^\star$ is low rank, and say $\varepsilon\sim \cN(0,\sigma^2)$. Let $N = m_1m_2$, and let $\varphi (\Theta): \bbR^{m_1 \times m_2} \rightarrow \bbR^n$ be the linear operator so that $[\varphi (\Theta)]_i =\inprod{\bX_i}{\Theta} $.  Our corresponding function is now $\ell(\Theta) = -\frac{1}{n} \| \by -\varphi (\Theta) \|^2_2 $. For this function, using arguments by ~\citet{Negahban2012journal}, we know $\|\nabla \ell(\bB^{\sfS^\star_r}) \|_2^2 \leq \frac{\log N}{n}$ and $\ell(\bB^{\sfS^\star_r}) - \ell(\mathbf{0)} \leq (s+1)$ with high probability. It is also straightforward to apply their results to bound $m_{k+r} \geq \left( \frac{1}{32} - \frac{162(k+r) \log N}{n} \right)$, and  $M_1\leq 1$, which gives explicit bounds as per Theorem~\ref{thm:recovery} for Algorithms~\ref{alg:greedy},~\ref{alg:omp} for the considered function and the design matrix.

\section{Experiments}
\label{sec:experiments}
In this section, we empirically evaluate the proposed algorithms. 

\subsection{Clustering under Stochastic Block Model}
\label{sec:clusteringexpts}
In this section, we test empirically the performance of GECO (Algorithm~\ref{alg:omp}) for a clustering task. We are provided with a graph with nodes and the respective edges between the nodes. The observed graph is assumed to have been noisily generated from a true underlying clustering. The goal is to recover the underlying clustering structure from the noisy graph provided to us. The adjacency matrix of the true underlying graph is low rank. As such, our greedy framework is applicable . We compare performance of Algorithm~\ref{alg:omp} on simulated data against standard baselines of spectral clustering which are commonly used for this task. We begin by describing a generative model for creating edges between nodes given the ground truth.

The Stochastic Block Model is a model to generate random graphs. It takes its input the set of $n$ nodes, and a partition of $[n]$ which form a set of disjoint clusters, and returns the graph with nodes and the generated edges. The model is also provided with generative probabilities $(p,q)$ -- so that a pair of nodes within the same cluster have an edge between them with probability $p$, while a pair of nodes belonging to different clusters have an edge between them with probability $q$. For simplicity we assume $q=(1-p)$. The model then iterates over each pair of nodes. For each such pair that belongs to same cluster, it samples an edge as Bernoulli($p$), otherwise as Bernoulli($1-p$). This provides us with a $\{0,1\}$ adjacency matrix.

For baselines, we compare against two versions of spectral clustering, which is a standard technique applied to find communities in a graph. The method takes as input the $n \times n$ adjacency matrix $\bA$, which is a $\{0,1\}$ matrix with an entry $\bA_{ij}= 1 $ if there is an edge between node $i$ and $j$, and is $0$ otherwise. From the adjacency matrix, the graph Laplacian $\bL$ is constructed. The Laplacian may be unnormalized, in which case it is simply $\bL = \bD - \bA$, where $\bD$ is the diagonal matrix of degrees of nodes. A normalized Laplacian is computed as $\bL_{\text{norm}} = \bD^{-\nicefrac{1}{2}} \bL  \bD^{-\nicefrac{1}{2}}$. After calculating the Laplacian, the algorithm solves for bottom $k$ eigenvectors of the Laplacian, and then apply $k$-means clustering on the rows of the thus obtained eigenvector matrix. We refer to the works of~\citet{Shi:2000nc,Ng:2001spectral} for the specific details of clustering algorithms using unnormalized and normalized graph Laplacian respectively.

We compare the spectral clustering algorithms with logistic PCA, which is a special case of the exponential family PCA~\citep{Collins:2001pca}. The exponential family extension of classical PCA is analogous to the extension of the linear regression to generalized linear models (GLMs). For a given matrix $\bX$, the GLM generative model assumes that each cell $\bX_{ij}$ is independently drawn with likelihood proportional to $\exp{\inprod{\Theta_{ij}}{\bX_{ij}} - G(\Theta_{ij})}  $, where $\Theta$ is the true underlying parameter, and $G(\cdot)$ is the partition function. It is easy to see we can apply our framework of greedy selection by defining $\ell(\cdot)$ as the log-likelihood:
\begin{equation*}
\ell(\Theta) =  \inprod{\Theta}{\bX} - \sum_{i,j} G(\Theta_{ij}),
\end{equation*}

where $\Theta$ is the true parameter matrix of $p$ and $q$ that generates a realization of $\bA$. Since the true $\Theta$ is low rank, we get the low rank constrained optimization problem: 

\begin{equation*}
\max_{\text{rank}(\Theta) \leq k} \ell(\Theta),
\end{equation*}

where $k$ is the hyperparameter that is suggestive of true number of clusters. Note that  lack of knowledge of true value of $k$ is not more restrictive than spectral clustering algorithms which typically also require the true value of $k$, albeit some subsequent works have tried to address tuning for $k$. 

Having cast the clustering problem  in the same form as~\eqref{eq:matrixLowRank}, we can apply our greedy selection algorithm as opposed to the more costly alternating minimizing algorithms suggested by~\citet{Collins:2001pca}. Since the given matrix is $\{0,1\}$ with each entry sampled from a Bernoulli, we use $G(x) = \log (1+ e^x)$ which gives us logistic PCA. 

We generate the data as follows. For $n=100$ nodes, and fixed number of cluster $k=5$, we vary the within cluster edge generation probability $p$ from $0.55$ to $0.95$ in increments of $0.05$, and use the Stochastic Block model to generate a noisy graph with each $p$. Note that smaller $p$ means that the sampled graph will be more noisy and likely to be more different than the underlying clustering.

 We compare against the spectral clustering algorithm using unnormalized Laplacian of~\citet{Shi:2000nc} which we label ``Spectral\_unnorm\{k\}" for $k=\{3,5,10\}$, and the spectral clustering algorithm using normalized Laplacian of~\citet{Ng:2001spectral} which we label ``Spectral\_norm\{k\}" for $k=\{3,5,10\}$. We use Algorithm~\ref{alg:omp} which we label ``Greedy\{k\}" for $k=\{3,5,10\}$. For each of these models, the referred $k$ is the supplied hyperparameter. We report the least squares error of the output from each model to the true underlying $\Theta$ (which we call the generalization error), and to the instantiation used for training $\bX$ (which we call the reconstruction error). The results are presented in Figure~\ref{fig:clustering}.
 
 Figure~\ref{fig:clustering} shows that the greedy logistic PCA performs well in not only re-creating the given noisy matrix (reconstruction) but also captures the true low rank structure better (generalization). Further, note that providing the true hyper parameter $k$ is vital for spectral clustering algorithms, while on the other hand greedy is less sensitive to $k$ which is very useful in practice as $k$ is typically not known. So the spectral clustering algorithms typically would involve taking an SVD and re-running the $k-means$ for different values of $k$ to choose the best performing hyperparameter. The greedy factorization on the other hand is more robust, and moreover is incremental - it does not require to be re-run from scratch for different values of $k$.

\begin{figure}[t]
\subfigure{\includegraphics[width=71mm,height=56mm]{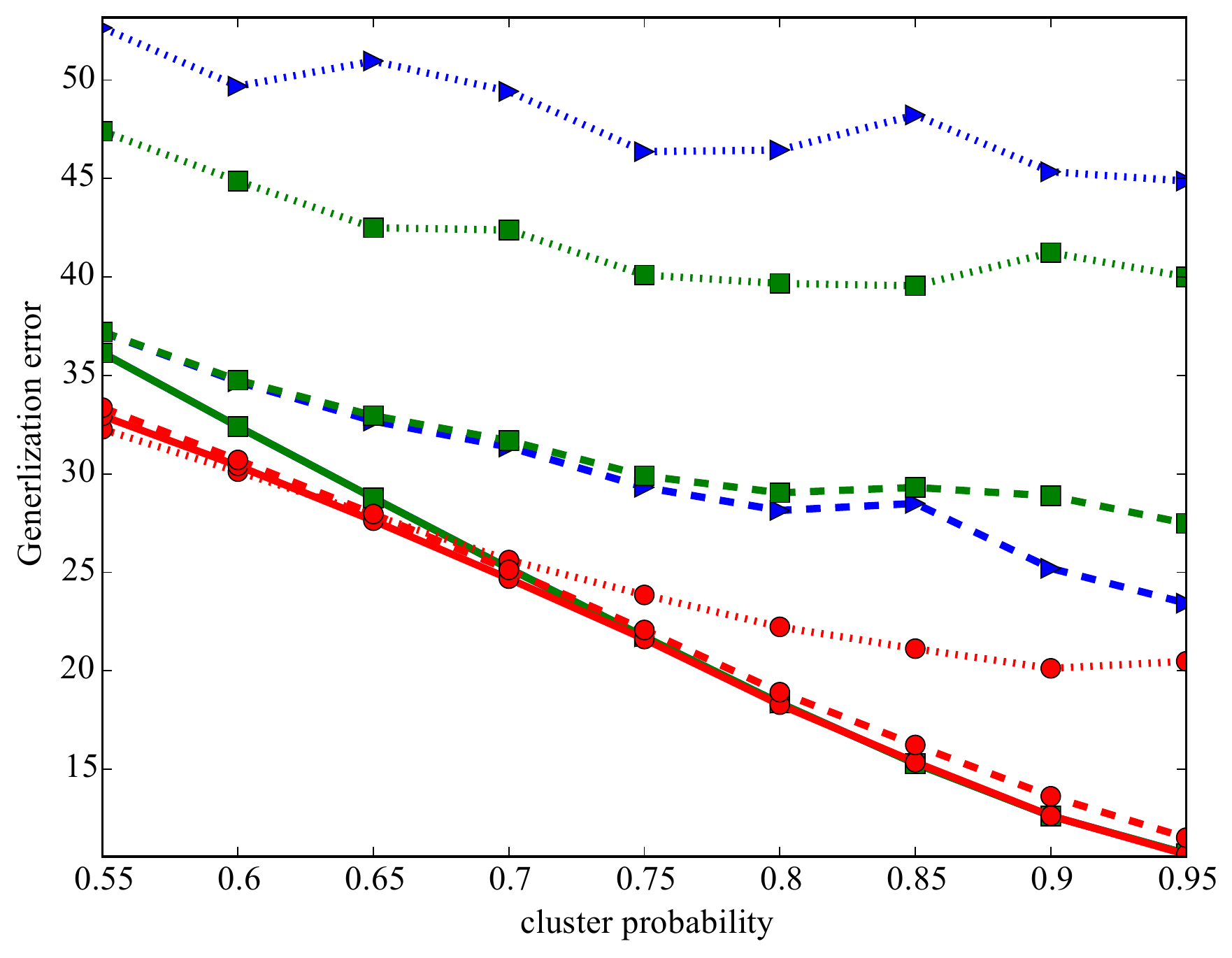}}
\subfigure{\includegraphics[width=95mm, height=56mm]{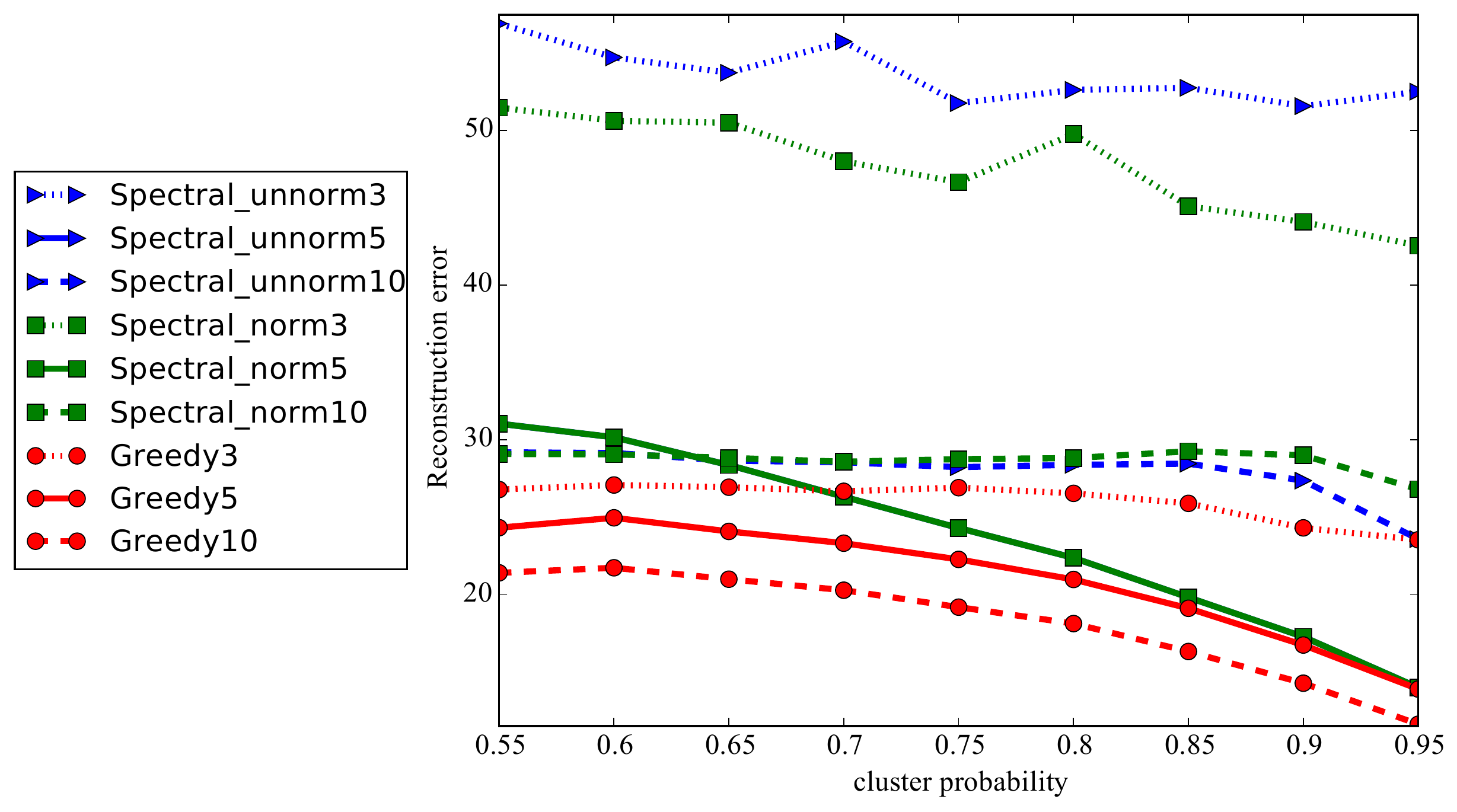}}
\caption{Greedy Logistic PCA vs spectral clustering baselines averaged over 10 runs. \emph{Top}: Robust performance of the greedy logistic PCA for generalizing over varying values of $k$ across different values of $p$, spectral clustering algorithms are more sensitive to knowing true value of $k$ \emph{Bottom}: Strong performance of greedy logisitic PCA even with small value of $k=3$ for reconstructing the given cluster matrix.}
\label{fig:clustering}
\end{figure}

\subsection{Word Embeddings}
The task of embedding text into a vector space yields a representation that can have many advantages, such as using them as features for subsequent tasks as sentiment analysis.~\citet{Mikolov:2013dr} proposed a context-based embedding called skip-gram or \texttt{word2vec}). The context of a word can be defined as a set of words before, around, or after the respective word. Their model strives to find an embedding of each word so that the representation predicts the embedding of each context word around it.  In a recent paper,~\citet{Omer:2014embedding} showed that the word embedding model proposed by~\citet{Mikolov:2013dr} can be re-interpreted as matrix factorization of the \emph{PMI} matrix constructed as follows. A word $c$ is in context of $w$ if it lies within the respective window of $w$. The PMI matrix is then calculated as 
\newcommand{\pmi}{\text{PMI}}
\newcommand{\ppmi}{\text{PPMI}}

\begin{equation*}
\pmi_{w,c} = \log \left( \frac{p(w,c)}{p(w)p(c)} \right).
\end{equation*}

In practice the probabilities $p(w,c), p(w), p(c)$ are replaced by their empirical counterparts. Further, note that $p(w,c)$ is $0$ if words $c$ and $w$ do not co-exist in the same, context which yields $-\infty$ for PMI. ~\citet{Omer:2014embedding} suggest using an alternative: $\ppmi_{w,c} = \max\{\pmi_{w,c}, 0\}$. They also suggest variations of PMI hyper parameterized by $k$ which corresponds to the number of negative samples in the training of skip gram model of~\citet{Mikolov:2013dr}. 

We employ the binomial model on the normalized count matrix (instead of the PMI), in a manner similar to the clustering approach in Section~\ref{sec:clusteringexpts}. The normalized counts matrix is calculated simply as $\frac{p(w,c)}{p(w)} $, without taking explicit logs unlike the \pmi ~ matrix. This gives us a probability matrix which has each entry between $0$ and $1$, which can be factorized under the binomial model greedily as per Algorithm~\ref{alg:omp}, similar to the way we do it in Section~\ref{sec:clusteringexpts}.

We note that embeddings using the SVD is more scalable than our greedy approach because of advancements in linear algebraic techniques for SVD on sparse matrices that PPMI yields.  Our experiments show that binomial PCA can be competitive to other existing embedding methods. Since our current implementation is not as scalable, we are further investigating this as on-going work.

We empirically study the embeddings obtained by binomial factorization on two tasks - word similarity and analogies. For word similarity, we use the W353 dataset~\citep{Finkelstein:2001:PSC} which has 353 queries and the MEN data~\citep{Bruni:2012:DST} which has 3000 queries. Both these datasets contain words with human assigned similarity scores. We evaluate the embeddings by their cosine similarity, and measuring the correlation with the available human ratings. For the analogy task, we use the Microsoft Research (MSR) syntactic analogies~\citep{Mikolov:2013lr} which has 8000 queries, and the Google mixed analogies dataset~\citep{mikolov:2013ee} with 19544 queries. To compute accuracy, we use the multiplication similarity metric as used by~\citet{Omer:2014embedding}. To train the word embeddings, we use the 2013 news crawl dataset\thanks{http://www.statmt.org/wmt14/training-monolingual-news-crawl}. We filter out stop words and non-ascii characters, and keep only the words which occurs atleast 2000 times which yields vocabulary of 6713. Note that since we filter only the most common words, several queries from the datasets are invalid because we do not have embeddings for words appearing in them. However, we do include them and report the overall average over the entire dataset, with metric being $0$ by default for each query we are not able to process. 

Table~\ref{tab:embeddings} shows the empirical evaluation. SVD and PPMI are the models proposed by~\citet{Omer:2014embedding}, while SGNS is skipgram with negative sampling model of~\citet{Mikolov:2013dr}.  We run each of these for $k=\{5,10,15,20\}$ and report the best numbers. The results show that alternative factorizations such as our application of binomial PCA to those of taking SVD of the \ppmi ~ matrix can be more consistent and competitive with other embedding methods.

\begin{table}
\centering
\begin{tabular}{ c | c | c| c | c  }
  & W353 & MEN & MSR & Google \\ \hline
SVD  & 0.226  & 0.233 & 0.086 & 0.092  \\
PPMI  & 0.175 & 0.178 & 0.210 & 0.130 \\
SGNS & 0.223 & 0.020 & 0.052 & 0.002 \\
Greedy & 0.202 & 0.198 & 0.176 & 0.102
\end{tabular}
\caption{Empirical study of binomial based greedy factorization shows competitive performance of word embeddings of common words across tasks and datasets.}
\label{tab:embeddings}
\end{table}

\section{Conclusion}
We have connected the problem of greedy low-rank matrix estimation to that of submodular optimization. Through that connection we have provided improved exponential rates of convergence for the algorithm. An interesting area of future study will be to connect these ideas to general atoms or dictionary elements.


\bibliographystyle{IEEEtranN}

\bibliography{biblio,sahandbib}
\newpage
{
\appendix
\section{Supplement}
In this section, we provide the missing proofs. 

\subsection{Proof of Theorem~\ref{thm:rscImpliesSubmod}}
\begin{proof}
An important aspect of the assumptions is that the space of atoms spanned by $\sfS$ is
orthogonal to the span of $\sfL$. Furthermore,
$\Sspan(\sfL \cup \sfS) \supset \Sspan(\sfS)$. Let $\bar{k} = k+r$. We will first upper bound the denominator in the submodularity ratio. From strong concavity, 
\begin{align*}
  \frac{\mkb}{2} \|\betals - \betal\|_F^2 & \leq \ell(\betal) - \ell(\betals) + \inprod{\nabla \ell(\betal)}{\betals - \betal}
\end{align*}
Rearranging 
\begin{align*}
  0 \leq \ell(\betals) - \ell(\betal) & \leq \inprod{\nabla \ell(\betal)}{\betals-\betal} - \frac{\mkb}{2}
                                  \|\betals - \betal\|_F^2 \\
                                & \leq \argmax_{\substack{\bX: \\ \bX = \bU_{\sfL \cup \sfS} \bH \bV_{\sfL \cup \sfS} \\ \bH \in \bbR^{|\sfL \cup \sfS| \times |\sfL \cup \sfS|}}} \inprod{\nabla
                                  \ell(\betal)}{\bX - \betal}
                                  - \frac{\mkb}{2} \| \bX - \betal\|_F^2 \\
                                & = \argmax_{\substack{\bX: \\ \bX = \bU_{\sfL \cup \sfS} \bH \bV_{\sfL \cup \sfS} \\ \bH \in \bbR^{|\sfL \cup \sfS| \times |\sfL \cup \sfS|}}}
                                  \inprod{P_{\bU_\sfS}(\nabla \ell(\betal)) P_{\bV_\sfS} }{\bX - \betal}
                                  -
                                  \frac{\mkb}{2} \| \bX - \betal\|_F^2, 
\end{align*}

where the last equality holds because $\inprod{ (\nabla \ell(\betal))}{ P_{\bU_\sfL}\bX P_{\bV_\sfL} - \betal} = 0$. Solving the argmax problem, we get $\bX= \betal + \frac{1}{\mkb}P_{\bU_\sfS}( \nabla \ell(\betal)) P_{\bV_\sfS}$. Plugging in,  we get, 

\begin{equation*}
   \ell(\betals) - \ell(\betal) \leq \frac{1}{2\mkb} \|P_{\bU_\sfS}(\nabla \ell(\betal))  P_{\bV_\sfS} \|_F^2
\end{equation*}

We next bound the numerator. Recall that the atoms in $\sfS$ are orthogonal to each other \textit{i.e.} $\bU_\sfS$ and $\bV_\sfS$ are both orthonormal.

For clarity, we define the shorthand, $\betals_{ij} =  \inprod{\bu_i\bv_j^\top} {\betals} \,  \bu_i \bv_j^\top$, for $i,j \in [| \sfL \cup \sfS| ]. $

With an arbitrary $i \in \sfS$, and arbitrary scalars $\alpha_{ii}, \alpha_{ij}, \alpha_{ji} $ for $j \in \sfL$,

\begin{align*}
  \ell(\bB^{(\sfL \cup \{i\})}) - \ell(\betal) & \geq  \ell(\betal + \alpha_{ii} \betals_{ii} + \sum_{j \in \sfL} \alpha_{ij} \betals_{ij} + \sum_{j \in \sfL} \alpha_{ji} \betals_{ji} ) - \ell(\betal) \\
                                           & \geq  \inprod{\nabla \ell(\betal)}{\alpha_{ii} \betals_{ii} + \sum_{j \in \sfL} \alpha_{ij} \betals_{ij} + \sum_{j \in \sfL} \alpha_{ji} \betals_{ji} }\\
                                           & \;\;\;\; -
                                             \frac{\Mo}{2} \left[ \alpha_{ii}^2 \| \betals_{ii} \|^2_F + \sum_{j \in \sfL} \alpha^2_{ij} \| \betals_{ij}\|_F^2 + \sum_{j \in \sfL} \alpha^2_{ji} \|\betals_{ji}\|_F^2  \right]. \\
                                           & \geq   \frac{ \inprod{\nabla \ell (\betal)}{\betals_{ii}}^2  }{2\Mo \| \betals_{ii}\|_F^2} + \sum_{j \in \sfL} \left( \frac{ \inprod{\nabla \ell (\betal)}{\betals_{ij}}^2  }{2\Mo \| \betals_{ij}\|_F^2} + \frac{ \inprod{\nabla \ell (\betal)}{\betals_{ji}}^2  }{2\Mo \| \betals_{ji}\|_F^2}  \right),
\end{align*}

where the last inequality follows by setting $\alpha_{ij} = \frac{\inprod{\nabla \ell (\betal)}{\betals_{ij}} }{\Mo \| \betals_{ij}\|_F^2} $ for $j \in \sfL$, and for $j=i$.

Summing up for all $i \in \sfS$, we get 

\begin{align*}
\sum_{i \in \sfS}  \ell(\bB^{(\sfL \cup \{i\})}) - \ell(\betal) & \geq \sum_{i \in \sfS} \left[ \frac{ \inprod{\nabla \ell (\betal)}{\betals_{ii}}^2  }{2\Mo \| \betals_{ii}\|_F^2} + \sum_{j \in \sfL} \left( \frac{ \inprod{\nabla \ell (\betal)}{\betals_{ij}}^2  }{2\Mo \| \betals_{ij}\|_F^2} + \frac{ \inprod{\nabla \ell (\betal)}{\betals_{ji}}^2  }{2\Mo \| \betals_{ji}\|_F^2}  \right) \right] \\
& = \frac{1}{2\Mo} \| P_{\bU_\sfS} \nabla \ell( \betal) P_{\bV_\sfS} \|^2_F
\end{align*}
\end{proof}

\subsection{Proofs for greedy improvement}
 Let $\sfS_i^G$ be the support set formed by Algorithm~\ref{alg:greedy} at iteration $i$. Define $A(i) := f(\sfS_i^G) - f(\sfS_{i-1}^G)$ with $A(0) = 0$ as the greedy improvement. We also define $B(i) := f(\sfS^*) -
f(\sfS_i^G)$ to be the remaining amount to improve, where $\sfS^\star$ is the optimum $k$-sized solution. We provide an auxiliary Lemma that uses the submodularity ratio to lower bound the greedy improvement in terms of best possible improvement from step $i$.
\begin{lemma}
\label{lem:greedyimprove}
  At iteration $i$, the incremental gain of the greedy method (Algorithm~\ref{alg:greedy}) is
  \begin{equation*}
    A(i+1) \geq \frac{\tau \gamma_{\sfS_i^G,r}}{r} B(i).
  \end{equation*}
\end{lemma}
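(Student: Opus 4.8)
The plan is to lower bound the single greedy gain $A(i+1)$ by pitting the atom actually selected at iteration $i+1$ against the atoms that make up the optimal set $\sfS^\star$, and then to use the submodularity ratio from Definition~\ref{def:submodularityRatio} to convert a sum of individual marginal gains into the joint gain $f(\sfS^\star) - f(\sfS_i^G) = B(i)$.

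First I would set $\sfL := \sfS_i^G$ and let $s$ be the atom returned by $\greedysel(\tau)$ at iteration $i+1$, so that $A(i+1) = f(\sfL \cup \{s\}) - f(\sfL)$. By the defining guarantee of $\greedysel$, this gain is at least $\tau$ times the best available marginal gain over all atoms orthogonal to $\sfL$. Since each atom obtained by orthogonalizing an element of $\sfS^\star$ against $\sfL$ is itself a feasible candidate for the oracle, the maximum dominates each such marginal gain, and hence dominates their average over the $r$ orthogonalized atoms. This yields
\begin{equation*}
A(i+1) \;\geq\; \frac{\tau}{r} \sum_{a \in \sfS^\star} \left[ f(\sfL \cup \{a\}) - f(\sfL) \right],
\end{equation*}
where the sum ranges over the orthogonalized atoms, exactly matching the numerator in the definition of $\gamma_{\sfL, \sfS^\star}$.

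Next I would apply the submodularity ratio directly: the bracketed sum equals $\gamma_{\sfL, \sfS^\star}\, [ f(\sfL \cup \sfS^\star) - f(\sfL) ]$. Monotonicity of $f(\cdot)$ gives $f(\sfL \cup \sfS^\star) \geq f(\sfS^\star)$, so $f(\sfL \cup \sfS^\star) - f(\sfL) \geq f(\sfS^\star) - f(\sfL) = B(i)$. Finally, since $|\sfS^\star| = r$ and $\sfL = \sfS_i^G \subseteq \sfS_i^G$ with $\sfL$ disjoint from the orthogonalized $\sfS^\star$, this pair is admissible in the worst-case minimization defining $\gamma_{\sfS_i^G, r}$, so $\gamma_{\sfL, \sfS^\star} \geq \gamma_{\sfS_i^G, r}$. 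Chaining these inequalities gives
\begin{equation*}
A(i+1) \;\geq\; \frac{\tau \gamma_{\sfS_i^G, r}}{r}\, B(i),
\end{equation*}
which is the claim.

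I expect the main obstacle to be the bookkeeping around orthogonalization. The oracle in $\greedysel$ ranges only over atoms orthogonal to $\sfL$, whereas the atoms of $\sfS^\star$ need not be; the resolution is that Theorem~\ref{thm:rscImpliesSubmod}, and therefore the submodularity ratio entering the bound, is stated for atoms sequentially orthogonalized against $\sfL$. I must ensure the \emph{same} orthogonalized atoms serve both as feasible candidates for the oracle (justifying the ``max $\geq$ each'' step) and as the summands in the numerator of $\gamma_{\sfL, \sfS^\star}$, so that the averaging step and the submodularity-ratio step compose without a mismatch. A secondary check is that the orthogonalized $\sfS^\star$ spans the same subspace as $\sfS^\star$, so that the monotonicity step loses nothing and the identification $f(\sfS^\star) - f(\sfL) = B(i)$ is exact.
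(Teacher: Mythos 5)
Your proposal is correct and follows essentially the same route as the paper's proof: orthogonalize $\sfS^\star$ against $\sfL = \sfS_i^G$, use the $\tau$-guarantee of $\greedysel$ together with ``max dominates average'' to get $A(i+1) \geq \frac{\tau}{r}\sum_{a}[f(\sfL \cup \{a\}) - f(\sfL)]$, then apply the submodularity ratio and the monotonicity fact $f(\sfL \cup \sfS^R) \geq f(\sfS^\star)$ to reach $B(i)$. Your closing remarks about using the \emph{same} orthogonalized atoms throughout, and about the orthogonalized set spanning the same subspace as $\sfS^\star$, are precisely the justifications the paper gives for its penultimate and final inequalities.
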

\begin{proof}
Let $\sfS = \sfS_i^G$. Let $\sfS^R$ be the sequential
  orthogonalization of the atoms in $\sfS^*$ relative to $\sfS$. Thus,
  \begin{align*}
    rA(i+1) \geq |\sfS^R| A(i+1) & \geq \tau |\sfS^R| \max_{a \in \sfS^R} f(\sfS \cup \{a\}) - f(\sfS) \\
                                 & \geq \tau  \sum_{a \in \sfS^R} [f(\sfS \cup \{a\}) - f(\sfS)] \\
                                 & \geq \tau \gamma_{\sfS,|\sfS^R|} [ f(\sfS \cup \sfS^R) - f(\sfS) ]\\
                                  & \geq \tau \gamma_{\sfS,|\sfS^R|} B(i)
  \end{align*}
  Note that the last inequality follows because $f(\sfS \cup \sfS^R) \geq f(\sfS^*)$. The
  penultimate inequality follows by the definition of weak submodularity, which applies in this case
  because the atoms in $\sfS^R$ are orthogonal to eachother and are also orthogonal to $\sfS$.
\end{proof}

Using Lemma~\ref{lem:greedyimprove}, one can prove an approximation guarantee for Algorithm~\ref{alg:greedy}. 

\subsubsection{Proof of Theorem~\ref{thm:approxGreedy}}

\begin{proof}
From the notation used for Lemma~\ref{lem:greedyimprove}, $A(i+1) = B(i) - B(i+1)$. Let $C=\frac{\tau \gamma_{\sfS^G_i,r}}{r}$. From Lemma~\ref{lem:greedyimprove}, we have, 

\begin{align*}
B(i+1) \leq (1 - C) B(i) \leq (1-C)^{i+1} B(0).
\end{align*}

From its definition, $B(0) = f(\sfS^\star) - f(\emptyset)$. So we get, 

\begin{align*}
 & \left[f(\sfS^\star) - f(\emptyset) \right] - \left[f(\sfS^G_i) - f(\emptyset) \right] \leq  (1 - C)^i  \left[ f(\sfS^\star) - f(\emptyset) \right] \\
\implies & \left[f(\sfS^G_i) - f(\emptyset) \right]  \geq  ( 1 - (1-C)^i)  \left[f(\sfS^\star) - f(\emptyset) \right] \geq \left( 1 - \frac{1}{e^  {\tau \gamma_{\sfS^G_i,r} \frac{k}{r}}} \right) \left[ f(\sfS^\star) - f(\emptyset) \right]
\end{align*}

from which the result follows.
\end{proof}


 \subsection{Proof for GECO bounds}
 
  Let $\sfS^O_i$ be the support set selected by the GECO procedure (Algorithm~\ref{alg:omp}) at iteration $i$. Similar to the section on greedy improvement, we define some notation. Let $D(i) := f(\sfS^O_i) - f(\sfS^O_{i-1})$ be the improvement made at step $i$, and as before we have $B(i)= f(\sfS^\star) - f(\sfS^O_i)$ be the remaining amount to improve.

We prove the following auxiliary lemma which lower bounds the gain after adding the atom selected by the subroutine \OMPsel in terms of operator norm of the gradient of the current iterate and smoothness of the function. 
\begin{lemma}
\label{lem:ompimprove}
Assume that $\ell(\cdot)$ is $m_{i}$-strongly concave and $M_{i}$-smooth over matrices of in the set $\tilde{\Omega}:= \{ (\bX,\bY): \text{rank}(\bX - \bY) \leq 1 \} $. Then, 
\begin{equation*}
D(i+1) \geq \frac{\tau m_{r+k}}{r \Mo}   B(i).
\end{equation*}
\end{lemma}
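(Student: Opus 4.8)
The plan is to sandwich a single scalar — the largest achievable atomic gradient correlation $g := \max_{(\bu,\bv)\,\perp\,\sfL}\langle \nabla\ell(\betal),\bu\bv^\top\rangle$, where $\sfL := \sfS^O_i$ — between the one-step gain $D(i+1)$ (lower bounded via smoothness) and the remaining gap $B(i)$ (upper bounded via strong concavity), and then to chain the two inequalities. Note that $g$ is exactly the top singular value of the gradient projected onto the orthogonal complement of the row and column spaces of $\sfL$, which is precisely the quantity $\OMPsel$ targets.

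For the lower bound on $D(i+1)$: the atom $s$ returned by $\OMPsel$ satisfies $\langle\nabla\ell(\betal),\bu_s\bv_s^\top\rangle \geq \tau g$. Since adding a single rank-one atom is a perturbation in $\tilde\Omega$, I would apply $\Mo$-smoothness to $\ell(\betal + \alpha\,\bu_s\bv_s^\top)$ and optimize over the scalar $\alpha$; because $f(\sfL\cup\{s\})$ refits over all coefficients it dominates this one-parameter choice, giving
\begin{equation*}
D(i+1) \;\geq\; \frac{\langle\nabla\ell(\betal),\bu_s\bv_s^\top\rangle^2}{2\Mo} \;\geq\; \frac{\tau^2 g^2}{2\Mo}.
\end{equation*}
This is where the selection tolerance enters \emph{squared}; I expect the factor to be $\tau^2$ rather than $\tau$, consistent with the exponent in $c_3$ of Theorem~\ref{thm:approxOMP} (so the single $\tau$ in the statement appears to be the intended $\tau^2$).

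For the upper bound on $B(i)$: let $\sfS^R$ be the sequential orthogonalization of $\sfS^\star$ against $\sfL$, so that $|\sfS^R|\leq r$ and $\Sspan(\sfL\cup\sfS^R)\supseteq\Sspan(\sfS^\star)$, whence $B(i) = \ell(\bB^{(\sfS^\star)})-\ell(\betal) \leq \ell(\bB^{(\sfL\cup\sfS^R)})-\ell(\betal)$. The matrix $\bB^{(\sfL\cup\sfS^R)}$ has rank at most $r+k$, so I can reuse verbatim the denominator computation from the proof of Theorem~\ref{thm:rscImpliesSubmod}, now invoking $m_{r+k}$-strong concavity, to obtain $\ell(\bB^{(\sfL\cup\sfS^R)})-\ell(\betal) \leq \frac{1}{2 m_{r+k}}\,\|P_{\bU_{\sfS^R}}\nabla\ell(\betal)P_{\bV_{\sfS^R}}\|_F^2$.

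The key reduction comes last: since $\sfS^R\perp\sfL$, the matrix $P_{\bU_{\sfS^R}}\nabla\ell(\betal)P_{\bV_{\sfS^R}}$ has rank at most $r$ and operator norm at most $g$, so $\|P_{\bU_{\sfS^R}}\nabla\ell(\betal)P_{\bV_{\sfS^R}}\|_F^2 \leq r\,g^2$, i.e. $g^2 \geq \frac{2 m_{r+k}}{r}B(i)$. Substituting into the smoothness bound yields $D(i+1)\geq \frac{\tau^2 m_{r+k}}{r\Mo}B(i)$. I expect the only delicate step to be this Frobenius-to-operator-norm reduction $\|M\|_F^2\leq\rank(M)\,\|M\|_2^2$ combined with the identification of $g$ with the operator norm of the projected gradient; the remaining inequalities are routine reuses of the concavity and smoothness bounds already established.
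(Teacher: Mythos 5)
Your proposal is correct and takes essentially the same route as the paper's own proof: smoothness plus $\tau$-optimality of $\OMPsel$ gives $D(i+1)\geq \frac{\tau^2 g^2}{2\Mo}$, the strong-concavity ``denominator'' bound from Theorem~\ref{thm:rscImpliesSubmod} applied to $\sfL\cup\sfS^R$ upper bounds $B(i)$ by $\frac{1}{2m_{r+k}}\|P_{\bU_{\sfS^R}}\nabla\ell(\betal)P_{\bV_{\sfS^R}}\|_F^2$, and the two are chained through the projected gradient with a factor $r$. In fact your execution is slightly cleaner than the paper's: your reduction $\|M\|_F^2\leq \text{rank}(M)\,\|M\|_2^2$ (with the observation that $\sfS^R\perp\sfL$ makes the projected gradient's operator norm at most $g$) correctly handles a step the paper writes as an equality between the diagonal sum $\sum_{i\in\sfS^R}\inprod{\bu_i\bv_i^\top}{\nabla\ell(\betal)}^2$ and the full Frobenius norm of the projection (which is really an inequality in the unhelpful direction), and your remark that the lemma's stated factor $\tau$ should be $\tau^2$ agrees both with the paper's own derivation and with the exponent $c_3$ in Theorem~\ref{thm:approxOMP}.
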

\begin{proof}
For simplicity, say $\sfL = \sfS^O_i$. Recall that for a given support set $\sfL$, $f(\sfL) = \ell(\betal)$ \textit{i.e.} we denote by $\betal$ the argmax for $\ell(\cdot)$ for a given support set $\sfL$. Hence, by the optimality of $\bB^{(\sfL \cup \{i\})}$,
\begin{align*}
D(i+1) &  = \ell(\bB^{(\sfL \cup \{i\})}) - \ell (\betal) \\
& \geq \ell( \bB^{(\sfL)} + \alpha \bu \bv^\top) - \ell(\betal) 
\end{align*}

for an arbitrary $\alpha \in \bbR$, and the vectors $\bu,\bv$ selected by \OMPsel. Using the smoothness of the $\ell(\cdot)$, we get,

\begin{align*}
D(i+1) \geq  \alpha \inprod{\nabla \ell(\betal)}{ \bu \bv^\top } - \alpha^2 \frac{\Mo}{2}
\end{align*}

Putting in $\alpha = \frac{\tau}{\Mo}\| \nabla \ell(\betal)\|_2$, and by $\tau$-optimality of \OMPsel, we get, 
\begin{align*}
D(i+1) \geq  \frac{\tau^2}{2\Mo}\| \nabla \ell(\betal)\|_2^2
\end{align*}

Let $\sfS^R$ be obtained from after sequentially orthogonalizing $\sfS^\star$ w.r.t. $\sfS_i$. By definition of the operator norm, we further get,

\begin{align*}
D(i+1)  &  \geq  \frac{\tau^2}{2\Mo}\| \nabla \ell(\betal)\|_2^2 \\
& \geq  \frac{\tau^2}{2 r \Mo} \sum_{i \in \sfS^R} \inprod{\bu_i \bv_i^\top}{ \nabla \ell(\betal) }^2 \\
& = \| P_{\bU_{\sfS^R}} \nabla \ell(\betal)  P_{\bV_{\sfS^R}} \|_F^2\\
& \geq \frac{\tau^2 m_{r+k}}{r \Mo} \left( \ell(\bB^{\sfL \cup \sfS^R}) - \ell(\betal) \right)\\
& \geq \frac{\tau^2 m_{r+k}}{r \Mo} \left( \ell(\bB^{\sfS^\star}) - \ell(\betal) \right)\\
& =  \frac{\tau^2 m_{r+k}}{r \Mo}  B(i)
\end{align*}
\end{proof}

The proof for Theorem~\ref{thm:approxOMP} from Lemma~\ref{lem:ompimprove} now follows using the same steps as for Theorem~\ref{thm:approxGreedy} from Lemma~\ref{lem:ompimprove}. 

\subsection{Proof for recovery bounds}

\subsubsection{Proof of Theorem~\ref{thm:recovery}}

For clarity of representation, let $C = C_{r,k}$, and for an arbitrary $\bH \in \bbR^{r \times r}$, let $\bB_\sfr = \bU_\sfS^\top \bH \bV_\sfS $, and $\bDelta:= \bB^{(\sfS_r)} - \bB_\sfs$. Note that $\bDelta$ has rank atmost $(k+r)$. Recall that by the $m_{k+r}$ RSC (Definition~\ref{def:RSCRSM}), 

\begin{align*}
\ell(\bB^{(\sfS_k)}) - \ell (\bB_\sfr) - \inprod{\nabla \ell (\bB_\sfr)}{\bDelta} \leq \frac{-m_{k+r}}{2}{\| \bDelta \|_F^2 }.
\end{align*}

From the approximation guarantee, we have, 

\begin{align*}
& \ell(\bB^{(\sfS_k)} ) - \ell(\bB_\sfr) \geq  (1 - C) [ \ell(\mathbf{0)} - \ell(\bB_\sfr)] \\
\implies & \ell(\bB^{(\sfS_k)} ) - \ell(\bB_\sfr)  - \inprod{\nabla \ell (\bB_\sfr)}{\bDelta}   \geq  (1 - C) [ \ell(\mathbf{0}) - \ell(\bB_\sfr)]  - \inprod{\nabla \ell (\bB_\sfr)}{\bDelta} \\
\implies & \frac{-m_{k+r}}{2}{\| \bDelta \|_F^2 }   \geq  (1 - C) [ \ell(\mathbf{0}) - \ell(\bB_\sfr)]  - \inprod{\nabla \ell (\bB_\sfr)}{\bDelta}\\
&\qquad\qquad\qquad \geq (1 - C) [ \ell(\mathbf{0}) - \ell(\bB_\sfr)]  - (k+r)^{\nicefrac{1}{2}}    \| \nabla \ell (\bB_\sfr)  \|_2  \|\bDelta \|_F, 
\end{align*}

where the last inequality is due to generalized Holder's inequality. Using $2ab \leq ca^2 + \frac{b^2}{c}$ for any positive numbers $a,b,c$, we get 

\begin{align*}
\frac{m_{k+r}}{2}{\| \bDelta \|_F^2 }    \leq    (k+r)  \frac{ \| \nabla \ell (\bB_\sfr)  \|^2_2}{m_{k+r}} + \frac{m_{k+r} \|\bDelta \|^2_F}{4} + (1 - C) [  \ell(\bB_\sfr)- \ell(\mathbf{0})] ,
\end{align*}

which completes the proof.

}
\end{document}